\documentclass[mnsc,nonblindrev]{informs3_hide}
\OneAndAHalfSpacedXI 

\usepackage[english]{babel}
\usepackage[autostyle, english = american]{csquotes}
\MakeOuterQuote{"}
\usepackage[colorlinks,linkcolor=blue, citecolor=blue]{hyperref}
\usepackage[normalem]{ulem}

\usepackage{cleveref}
\crefname{subsection}{section}{subsections}
\usepackage{eqnarray}
\usepackage{algorithm,algpseudocode}
\usepackage{amsmath, bbm, enumitem, xparse, bm, lipsum}
\usepackage{amssymb}
\usepackage{comment}
\usepackage{mathtools}
\usepackage{subcaption}
\usepackage{booktabs}

\newcommand{\ALG}{\mathsf{ALG}}
\newcommand{\OPT}{\mathsf{OPT}}

\newcommand{\Var}{\mathrm{Var}}

\newcommand{\R}{\mathbb{R}}
\newcommand{\E}{\mathbb{E}}

\newcommand{\Reg}{\operatorname{Reg}}

\DeclarePairedDelimiter{\norm}{\lVert}{\rVert}

\Crefname{assumption}{Assumption}{Assumptions}
\crefname{assumption}{assumption}{assumptions}
\newcommand{\Nmax}{N_{\max}}
\newcommand{\cbar}{\bar c}
\newcommand{\qhat}{\widehat q}
\newcommand{\Z}{\mathbb Z}
\newcommand{\zhat}{\widehat z}
\newcommand{\proj}{\Pi}

\NewDocumentEnvironment{myproof}{o}
{\IfNoValueTF{#1}{\paragraph{{Proof.} }} {\paragraph{{#1.} }} }
{\hfill$\qed$}

\usepackage[dvipsnames]{xcolor}

\usepackage{makecell}
\usepackage[table]{xcolor}
\definecolor{oursbg}{HTML}{F1EFFE}
\newcommand{\ourscell}[1]{\cellcolor{oursbg}#1}
\usepackage{wrapfig}
\usepackage{xparse}

\usepackage{natbib,bbm,multirow,multicol}
 \bibpunct[, ]{(}{)}{,}{a}{}{,}%
 \def\bibfont{\small}%
\TheoremsNumberedThrough     
\ECRepeatTheorems

\EquationsNumberedThrough    

\allowdisplaybreaks

\begin{document}


\RUNAUTHOR{Zong, Wang, Jiang}

\RUNTITLE{CERO: Adaptive Rollout Optimization}

\TITLE{
CERO: Where and When to Allocate Rollouts for RL Post-Training
}

\ARTICLEAUTHORS{%
\AUTHOR{Yiming Zong\textsuperscript{a},\quad Yige Wang\textsuperscript{a}, \quad Xing Hu\textsuperscript{b}, \quad Jiashuo Jiang\textsuperscript{a}, \quad Zuo-Jun Max Shen\textsuperscript{c}}

\AFF{
\textsuperscript{a} Department of Industrial Engineering \& Decision Analytics, Hong Kong University of Science and Technology \\
\textsuperscript{b} Innovation and Information Management, The University of Hong Kong \\
\textsuperscript{c} Faculty of Engineering \& Faculty of Business and Economics, The University of Hong Kong
}
}

\ABSTRACT{
Adaptive rollout methods for group-relative reinforcement learning typically allocate a fixed per-update budget across prompts. We instead study how to coordinate a finite rollout budget over the entire training horizon. We formulate this problem using a concave surrogate utility of cumulative prompt exposure and introduce CERO, an online primal–dual scheduler for prompt admission and budget pacing. In our experiments, each admitted prompt receives a fixed-size response group. CERO instead adapts which prompts are selected, how often they are revisited across rounds, and how many groups are generated in each round. A compact Fenchel representation linearizes the dependence on cumulative exposure, while projected online gradient descent updates prompt-specific supporting slopes and a shared budget price using reward-variation feedback and budget deviations. We establish pathwise guarantees for the surrogate allocation objective against fixed-rate and same-path time-varying benchmarks, with explicit terms for proxy discrepancy and rate variation. Under matched training-response budgets, CERO attains the highest avg@16 macro-average on each of three backbones across five mathematical reasoning benchmarks. Mechanistic analyses link CERO's prompt choices to within-group reward contrast, while multi-seed ablations show gains from adaptive pacing over both uniform and preset spending schedules.
}

\KEYWORDS{Reinforcement Learning, LLM Post-training, Online Resource Allocation, Rollout Optimization}


\maketitle

\section{Introduction}
Reinforcement learning with verifiable rewards has become a standard approach for post-training reasoning models. Within this paradigm, recent work has increasingly focused on adaptive rollout allocation to improve training efficiency. Standard GRPO-style training typically assigns the same number of responses to each selected prompt \citep{shao2024deepseekmath,guo2025deepseek}, even though the resulting groups can differ substantially in informativeness. In particular, all-correct or all-incorrect groups provide no within-group reward contrast. Recent methods therefore adapt how rollout budgets are distributed across prompts: KnapsackRL \citep{li2025knapsack} allocates a fixed exploration budget according to estimated task value, VIP \citep{nguyen2026adaptive} allocates minibatch rollouts to reduce expected gradient variance, and HORA \citep{wang2026spend} redistributes a batch budget according to marginal hit utility.
Collectively, these methods primarily adapt \emph{where} rollouts are allocated within a given round-level budget. We focus on a complementary temporal dimension that arises when rollout decisions are coupled by a shared finite training budget:

{\centering
\textit{Given a finite rollout budget, when should we spend more, and when less, across training?}
\par}


This horizon-level problem remains nontrivial even when each admitted prompt receives a fixed-size response group. The scheduler must still decide which prompts to admit, how often to revisit them across rounds, and how much of the remaining budget to spend at different training stages. Rollouts spent now cannot be reassigned later, and repeated exposure to the same prompt may yield diminishing marginal utility. We capture this effect with a concave utility over cumulative prompt exposure. These couplings motivate a resource allocation formulation over prompts and rounds.

However, this finite-horizon formulation is not directly separable across rounds: each prompt's cumulative allocation appears inside a nonlinear concave utility, and the global budget further couples spending decisions over time. Our key methodological contribution is to adapt Fenchel dualization and projected online gradient descent (OGD) to this rollout optimization problem. The Fenchel representation linearizes dependence on cumulative exposure, reducing the coupled horizon objective to tractable round-level allocation problems. Projected OGD then adapts prompt-specific supporting slopes from reward feedback and a shared budget price from aggregate allocation deviations.

These dual signals play complementary roles in CERO: prompt-specific slopes guide prompt prioritization, while the shared price regulates budget pacing across rounds. A prompt-level Beta state provides a predictive signal of future within-group reward variation, while the dual variables and the remaining budget are carried forward across rounds, enabling coordination throughout training. This online structure admits pathwise guarantees for the surrogate allocation objective under fixed rates and a same-path time-varying benchmark. Empirically, CERO achieves the highest avg@16 macro-average on all three evaluated backbones under matched training-response budgets. 
Matched controls and mechanistic analyses further connect these gains to prompt-aware assignment and horizon-level budget adaptation, with CERO outperforming each ablated variant in multi-seed average performance.

\begin{figure}[t]
    \centering
    \includegraphics[width=\linewidth]{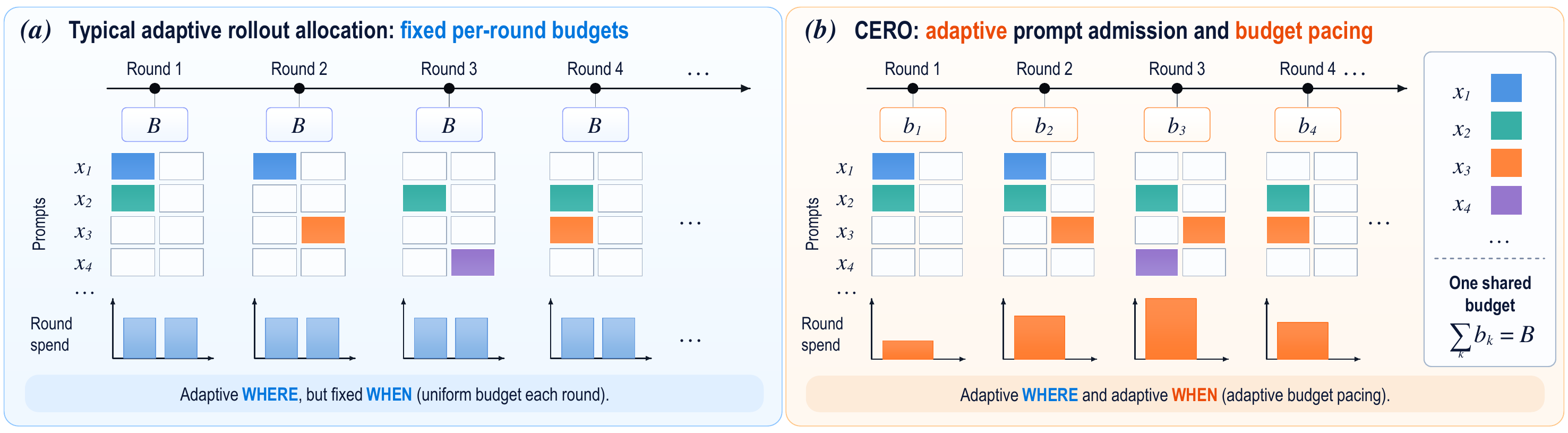}
    \setlength{\abovecaptionskip}{1pt}
    \caption{Schematic comparison of rollout optimization.
(a) Typical adaptive methods redistribute a fixed per-round budget across prompts.
(b) CERO coordinates prompt admission and per-round budget pacing under a shared finite training budget, with admitted prompts receiving fixed-size response groups.}
    \label{fig:pipeline}
\end{figure}

Our contributions are threefold:
\begin{itemize}

\item We formulate finite-horizon rollout optimization as a resource-allocation problem with a concave utility over cumulative prompt exposure, and derive an online primal–dual scheduler using a compact Fenchel representation and projected OGD. The resulting scheduler coordinates prompt admission, repeated exposure across rounds, and per-round spending under a shared finite budget.

\item We establish pathwise guarantees for the surrogate allocation objective under fixed rates and a same-path time-varying benchmark. The bounds explicitly separate online optimization error, reward-proxy discrepancy, and temporal rate variation.

\item We evaluate CERO on three backbones and five mathematical reasoning benchmarks under the same rollout budgets. CERO obtains the highest avg@16 macro-average on each backbone. Matched controls and multi-seed ablations further connect these gains to prompt-aware assignment and horizon-level coordination, with CERO outperforming fixed-pacing, preset-pacing, and direct marginal-utility variants.

\end{itemize}

\paragraph{Relation to prior work.}
CERO lies at the intersection of three related directions: difficulty-aware sampling and filtering, adaptive rollout allocation, and online resource allocation. The first focuses on identifying and prioritizing informative prompts, the second studies how limited rollout budgets should be allocated across prompts, and the third provides a broader framework for coordinating sequential decisions under shared resource constraints. Appendix~\ref{app:lr} provides a detailed review of these three lines of research.

\section{Finite-Horizon Rollout Optimization}
\label{sec:optimization}

\subsection{Problem Formulation}
\label{sec:plan}

We consider a finite training process of $K$ rounds over a fixed pool $\mathcal X=\{x_1,\ldots,x_M\}$ of $M$ prompts.  Let $N_{i,k}$ denote the number of units assigned to prompt $i$ in round $k$, where one allocation unit corresponds to a fixed quantum of $g\in\mathbb Z_{>0}$ generated responses. Define its cumulative exposure by $z_{i,k}:=\sum_{t=1}^{k}N_{i,t},~ z_{i,0}=0$. Let $N_{\max},C,B\in\mathbb Z_{\ge0}$ be the per-prompt per-round cap, the aggregate per-round capacity, and the total rollout budget. Our goal is to maximize the total utility of cumulative prompt exposure by allocating a finite rollout budget across prompts and rounds:
\begin{equation}
\begin{aligned}
\max_{\mathbf N\in\{0,\ldots,N_{\max}\}^{M\times K}}\quad
& \sum_{i=1}^M U_i(z_{i,K})\\
\text{s.t.}\quad
& \sum_{i=1}^M N_{i,k}\le C,
&& k\in[K],\\
& \sum_{k=1}^K\sum_{i=1}^M N_{i,k}\le B.
\end{aligned}
\label{eq:rollout-problem}
\end{equation}
Here $U_i$ is a nondecreasing concave utility of cumulative exposure, specified
in the next subsection. We assume
\begin{equation}
0<B\le KC,\quad C\le MN_{\max}.
\label{eq:capacity-assumption}
\end{equation}
Denote the feasible set in \cref{eq:rollout-problem} by
$\mathcal N_{B,C}$.
We distinguish allocation units from generated-response counts using the superscript resp:
$N_{i,k}^{\mathrm{resp}}=gN_{i,k}$,
$N_{\max}^{\mathrm{resp}}=gN_{\max}$,
$C^{\mathrm{resp}}=gC$, and
$B^{\mathrm{resp}}=gB$.
Finally, define
\[
Z:=KN_{\max},
\qquad
\bar B:=\frac{B}{K},
\qquad
R:=\frac{B}{M},
\]
where $Z$ is the maximum per-prompt exposure over training, $\bar B$ is the
average per-round budget, and $R$ is the equal-share exposure scale.

\subsection{Value-capture utility}
\label{sec:utility}
We use a concave surrogate utility to model diminishing returns to cumulative prompt exposure. Each prompt has a latent value-capture rate \(q_i\in[0,1/4]\), for which Section 3 provides a reward-variation proxy. We instantiate the utility with the normalized exponential form
$$
U(q,z):=1-\exp\left(-4\lambda q\frac{z}{R}\right),
\quad q\in[0,1/4],\quad z\in[0,Z],
$$
where $\lambda>0$ controls saturation and $z/R$ measures exposure relative to the equal-share horizon scale. By construction, the utility is nondecreasing and concave in $z$, encoding diminishing marginal utility with exposure. The marginal utility of an additional allocation unit satisfies
\begin{equation}
0\le \partial_zU(q,z)
=\frac{4\lambda q}{R}\exp\left(-4\lambda q\frac{z}{R}\right)
\le \frac{\lambda}{R}=:\cbar.
\label{eq:slope-bound}
\end{equation}
This bound yields the compact slope domain $[0,\cbar]$ used in the Fenchel representation below.
For comparison, consider a full-information oracle that knows
$\bm q=(q_1,\ldots,q_M)$ in advance and chooses the feasible allocation maximizing total value capture:
\begin{equation}
\OPT_K^{\mathrm{stat}}(\bm q)
:=\max_{\mathbf N\in\mathcal N_{B,C}}
\sum_{i=1}^M U(q_i,z_{i,K}).
\label{eq:static-opt}
\end{equation}
Since the objective depends on the allocation history only through $z_{i,K}$, the oracle problem is determined by the terminal exposure vector. Appendix~\ref{app:full-information-oracle} gives its optimal allocation. 

\subsection{Compact Fenchel representation}
\label{sec:fenchel}
To simplify notation, let $U_i(z):=U(q_i,z)$. Using the uniform slope bound in \cref{eq:slope-bound}, define the restricted concave conjugate
$$
U_i^*(\theta):=\inf_{0\le s\le Z}\{\theta s-U_i(s)\},
\quad \theta\in[0,\cbar].
$$
The interval $[0,\cbar]$ contains a supporting slope at every exposure level, so the representation remains exact on this compact domain. By concavity,
$$
U_i(z)=\min_{0\le\theta\le\cbar}\{\theta z-U_i^*(\theta)\},
\qquad 0\le z\le Z.
$$
Thus, the nonlinear dependence on cumulative exposure can be represented exactly through affine functions of $z$.
To capture the scarcity of the shared budget across rounds, let $\mu\in[0,\cbar]$ denote the corresponding dual price. For a single round, let $\bm n=(n_1,\ldots,n_M)$ denote the allocation vector and define the continuous feasible region
$$
\mathcal P_C:=\left\{\bm n\in\R^M:0\le n_i\le\Nmax,\ \sum_i n_i\le C\right\}.
$$
Since $C$ and $N_{\max}$ are integers, $\mathcal P_C$ is integral, so the continuous relaxation is exact for the round-level linear problem. Let $\bm\theta=(\theta_1,\ldots,\theta_M)\in[0,\cbar]^M$ collect the prompt-specific supporting slopes. Combining the Fenchel representation with the shared budget price $\mu$ makes the horizon-level upper bound separable across rounds, yielding the per-round dual objective
$$
L(\bm\theta,\mu)
=\bar B\mu+
\max_{\bm n\in\mathcal P_C}\sum_{i=1}^M(\theta_i-\mu)n_i
-\frac1K\sum_{i=1}^M U_i^*(\theta_i),
$$
By weak duality, $\OPT_K^{\mathrm{stat}}(\bm q)\le K L(\bm\theta,\mu)$ for all $(\bm\theta,\mu)$ in the stated domains. 
Let $d_i:=\theta_i-\mu$, which compares prompt $x_i$'s supporting slope against the shared budget price.


\section{Reward-Variation Proxy}
We use predicted within-group reward variance as a proxy for the latent value-capture rate in Section ~\ref{sec:utility}. This quantity characterizes expected reward contrast in a fresh response group, rather than directly estimating the downstream improvement from a policy update.
Let $r(x,y)\in\{0,1\}$ be the verifiable reward. At the beginning of round $k$, we maintain an auxiliary Beta state for prompt $x_i$:
$$
P_{i,k}\mid\mathcal H_i^k\sim\mathrm{Beta}(a_{i,k},b_{i,k}),
\quad a_{i,k},b_{i,k}>0,
$$
where $\mathcal H_i^k$ denotes the reward history of prompt $i$. This state provides a compact predictive summary of the prompt's observed reward history.
Let $G_{i,k}:=N_{i,k}^{\mathrm{resp}}=gN_{i,k}$ denote the number of responses generated for prompt $x_i$ in round $k$. Let $S_{i,k}$ and $F_{i,k}$ denote the numbers of successful and failed responses, and set $S_{i,k}=F_{i,k}=0$ if there is no allocation to this prompt. After round $k$, we update the Beta state with forgetting factor $\rho\in(0,1]$:
\begin{align}
a_{i,k+1}=a_0+\rho(a_{i,k}-a_0)+S_{i,k},~
b_{i,k+1}=b_0+\rho(b_{i,k}-b_0)+F_{i,k},
\label{eq:discounted-beta}
\end{align}
where $a_0,b_0>0$ are the prior parameters. When $\rho=1$, the update reduces to standard cumulative Beta--Bernoulli updating. When $\rho<1$, older evidence is gradually discounted toward the prior, allowing the state to track changes in the policy.
We use predicted reward variation as a proxy for the latent value-capture rate:
\begin{equation}
\qhat_{i,k}:=\E[P_{i,k}(1-P_{i,k})\mid\mathcal H_i^k]
=\frac{a_{i,k}b_{i,k}}{(a_{i,k}+b_{i,k})(a_{i,k}+b_{i,k}+1)}
\in[0,1/4].
\label{eq:qhat}
\end{equation}

\begin{proposition}[Predicted within-group reward variance]
\label{prop:group-variance}
For any group size $G\ge2$, suppose that conditional on $(P_{i,k},\mathcal H_i^k)$ the next $G$ binary rewards are i.i.d. Bernoulli$(P_{i,k})$. Let $\bar r_G:=G^{-1}\sum_{j=1}^G r^{(j)}$ and
$
\widehat{\mathrm{Var}}_G(r)
:=
\frac{1}{G-1}
\sum_{j=1}^{G}
\left(r^{(j)}-\bar r_G\right)^2.
$
Then it holds that
$$
\E[\widehat{\operatorname{Var}}_G(r)\mid\mathcal H_i^k]=\qhat_{i,k}.
$$
\end{proposition}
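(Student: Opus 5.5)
The argument is a tower-property computation. We first condition on the latent success probability, where the unbiasedness of the sample variance for i.i.d. data applies. We then average over the Beta state and evaluate the resulting Beta moment in closed form, which should reproduce \cref{eq:qhat}.

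\textbf{Step 1 (inner conditional expectation).} Conditional on $(P_{i,k},\mathcal H_i^k)$, the rewards $r^{(1)},\ldots,r^{(G)}$ are i.i.d.\ Bernoulli$(P_{i,k})$. Since $G\ge2$, the classical unbiasedness of the sample variance gives
\[
\E\bigl[\widehat{\Var}_G(r)\mid P_{i,k},\mathcal H_i^k\bigr]=P_{i,k}(1-P_{i,k}).
\]
To check this directly, use the identity $\sum_j (r^{(j)}-\bar r_G)^2=\sum_j (r^{(j)})^2-G\bar r_G^2$. Because the rewards are binary, $(r^{(j)})^2=r^{(j)}$, so the first term has conditional mean $GP$. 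For the second term, $\E[\bar r_G^2\mid P]=P^2+P(1-P)/G$. The conditional mean of the sum is therefore $GP-GP^2-P(1-P)=(G-1)P(1-P)$, and dividing by $G-1$ gives the claim.

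\textbf{Step 2 (tower property).} Taking the expectation over $P_{i,k}$ given $\mathcal H_i^k$ yields
\[
\E[\widehat{\Var}_G(r)\mid\mathcal H_i^k]=\E[P_{i,k}(1-P_{i,k})\mid\mathcal H_i^k].
\]
By definition, the right-hand side is $\qhat_{i,k}$ in \cref{eq:qhat}. Measurability is not an issue, because $\mathcal H_i^k\subseteq\sigma(P_{i,k},\mathcal H_i^k)$ and everything is bounded.

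\textbf{Step 3 (closed form).} Finally, we confirm the displayed Beta formula so that the statement matches \cref{eq:qhat} exactly. Write $a=a_{i,k}$ and $b=b_{i,k}$. The Beta moments are $\E[P]=a/(a+b)$ and $\E[P^2]=a(a+1)/((a+b)(a+b+1))$. Hence
\[
\E[P]-\E[P^2]=\frac{a(a+b+1)-a(a+1)}{(a+b)(a+b+1)}=\frac{ab}{(a+b)(a+b+1)}.
\]

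There is no real obstacle here. The only point needing care is the $G-1$ normalization in Step 1, which is exactly what makes the estimator conditionally unbiased and is the reason the hypothesis $G\ge2$ is required.
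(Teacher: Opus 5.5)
Your proof is correct and has the same structure as the paper's: conditional unbiasedness given $P_{i,k}$, then the tower property. The only difference is in the inner step. The paper uses the pairwise-disagreement identity $2\widehat{\mathrm{Var}}_G(r)=\binom{G}{2}^{-1}\sum_{j<\ell}\mathbf{1}\{r^{(j)}\ne r^{(\ell)}\}$, where each pair disagrees with probability $2P_{i,k}(1-P_{i,k})$, whereas you expand the sum of squares directly and also check the Beta closed form in \cref{eq:qhat}.
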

Thus, $\qhat_{i,k}$ is the predictive expected within-group reward variance.
Let $m_{i,k}:=\E[P_{i,k}\mid\mathcal H_i^k]$ denote the predictive mean and we can obtain
\[
\qhat_{i,k}
=
m_{i,k}(1-m_{i,k})
-
\Var(P_{i,k}\mid\mathcal H_i^k).
\]
Therefore, $\qhat_{i,k}$ is largest when the predicted success rate is near
$1/2$, but is reduced when the Beta state is more uncertain.

\begin{proposition}[Fixed-sample stationary calibration]
\label{prop:beta-calibration}
Let $Y_1,\ldots,Y_n$ be i.i.d.\ Bernoulli$(p)$ observations with fixed sample size $n$. Define
$
a:=a_0+\sum_{j=1}^n Y_j,~
b:=b_0+n-\sum_{j=1}^n Y_j,~
s_0:=a_0+b_0,
$
and let
$\qhat_n:=ab/[(a+b)(a+b+1)]$ and $q:=p(1-p)$.
Then it holds that
\begin{equation}
\E|\qhat_n-q|
\le
\frac{\sqrt n}{2(n+s_0)}
+\frac{s_0}{n+s_0}
+\frac{1}{4(n+s_0+1)}.
\label{eq:beta-calibration-bound}
\end{equation}
Consequently, $\qhat_n$ is consistent at rate $O(n^{-1/2})$ under a stationary Bernoulli model  with a fixed, nonadaptive sample size
\end{proposition}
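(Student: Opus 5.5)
The proof splits $\qhat_n-q$ into three pieces that match the three terms of \eqref{eq:beta-calibration-bound}. Write $S:=\sum_{j=1}^n Y_j$, $s:=a+b=n+s_0$, and $\tilde m:=a/s$ for the posterior mean. Then
\[
\qhat_n=\frac{ab}{s(s+1)}=\frac{s}{s+1}\,\tilde m(1-\tilde m),
\qquad\text{so}\qquad
\qhat_n-\tilde m(1-\tilde m)=-\frac{\tilde m(1-\tilde m)}{s+1}.
\]
Since $\tilde m(1-\tilde m)\le 1/4$, this piece has absolute value at most $1/(4(n+s_0+1))$, which is the third term. It remains to bound $\E|\tilde m(1-\tilde m)-p(1-p)|$ by the first two terms.

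The function $h(m)=m(1-m)$ is $1$-Lipschitz on $[0,1]$, because $|h(x)-h(y)|=|x-y|\,|1-x-y|\le|x-y|$. Hence $|h(\tilde m)-h(p)|\le|\tilde m-p|$. Next decompose
\[
\tilde m-p=\frac{a_0+S-sp}{s}=\frac{S-np}{n+s_0}+\frac{a_0-s_0p}{n+s_0}.
\]
For the first piece, Cauchy--Schwarz gives $\E|S-np|\le\sqrt{np(1-p)}\le\sqrt n/2$, which yields $\sqrt n/(2(n+s_0))$. For the second piece, $a_0-s_0p=a_0(1-p)-b_0p$ is a convex combination of $a_0$ and $-b_0$, so $|a_0-s_0p|\le\max(a_0,b_0)\le s_0$. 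This gives the second term $s_0/(n+s_0)$. Adding the three contributions by the triangle inequality gives \eqref{eq:beta-calibration-bound}.

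The consistency claim then follows directly. With $s_0$ fixed, the first term is $O(n^{-1/2})$ and the other two are $O(n^{-1})$, so $\E|\qhat_n-q|=O(n^{-1/2})$, and convergence in probability follows by Markov's inequality.

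The main step is the Lipschitz reduction from $\qhat_n$ to $|\tilde m-p|$; it keeps the bound free of $p$ and avoids working with the variance formula directly. No step is delicate, but the bias term is loose: the bound is valid, just not tight.
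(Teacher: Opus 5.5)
Your proposal is correct and follows the paper's proof essentially step for step. It uses the same factorization $\qhat_n=\frac{s}{s+1}\tilde m(1-\tilde m)$ for the $1/(4(n+s_0+1))$ term, the same $1$-Lipschitz reduction to $|\tilde m-p|$, and the same split into a centered fluctuation bounded by $\sqrt{\Var}$ and a prior-bias term bounded by $s_0/(n+s_0)$; your convex-combination argument even gives the slightly sharper intermediate bound $\max(a_0,b_0)/(n+s_0)$.
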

\Cref{prop:beta-calibration} concerns fixed-sample stationary calibration. Our allocation guarantees instead hold pathwise for arbitrary realized proxy sequences, with the bounds explicitly accounting for proxy error relative to the target rates.

\section{CERO Algorithm}
To solve the finite-horizon rollout-allocation problem, we develop CERO, an online primal--dual scheduler summarized in \Cref{alg:cero}. At each round, CERO maintains normalized prompt-specific supporting slopes and a normalized
shared budget price:
\[
\theta_i^k=\cbar\,\vartheta_i^k,
\quad
\mu^k=\cbar \,\nu^k,
\quad
\vartheta_i^k,\nu^k\in[0,1].
\] 
After generating the selected responses and observing their rewards, CERO updates the predictive state and dual variables for the next round.

\paragraph{Round-level allocation}
Since $\theta_i^k-\mu^k=\cbar(\vartheta_i^k-\nu^k)$, define the normalized
margin
$
d_{i,k}:=\vartheta_i^k-\nu^k.
$
A positive margin indicates that prompt $x_i$'s supporting slope exceeds the
shared budget price. CERO first computes the capacity-constrained virtual allocation
\begin{equation}
\widetilde{\bm N}_k\in
\arg\max_{\bm n\in\mathcal P_C\cap\Z^M}
\sum_{i=1}^M d_{i,k}n_i.
\label{eq:round-proposal}
\end{equation}
This problem has a simple exact solution: prompts with positive margins are ranked in decreasing order, with up to $N_{\max}$ units assigned to each while capacity remains. CERO then allocates rollout units in this order up to the remaining global budget, which is updated as $B_{k+1}=B_k-\sum_i N_{i,k}.$

Since the round-level objective is linear in the allocation units, the ranked solution assigns selected prompts the per-prompt cap, except possibly for a residual allocation. In Section~6, we use \(N_{\max}=2\) and choose the round capacity and total budget as multiples of \(N_{\max}\), so every selected prompt receives one eight-response group. CERO therefore adapts which prompts receive groups, how often they are revisited across rounds, and how many groups are generated per round.

\paragraph{Reward feedback and supporting exposure}
After observing the rewards from round $k$, CERO updates the Beta state using \cref{eq:discounted-beta} and recomputes the reward-variation proxy $\qhat_{i,k+1}$ using \cref{eq:qhat}. Let
$
\widehat U_{i,k+1}(z):=U(\qhat_{i,k+1},z),~
c_{i,k+1}:=\frac{4\lambda\qhat_{i,k+1}}{R}.
$
For a supporting slope $\theta$, the corresponding exposure is
\[
\hat z^*_{i,k+1}(\theta)
\in
\arg\min_{0\le z\le Z}
\left\{\theta z-\widehat U_{i,k+1}(z)\right\}.
\]
For our exponential utility, this minimizer has the closed
form $\hat z^*_{i,k+1}(\theta)=\hat z^*(\theta;c_{i,k+1})$, where
\begin{equation}
\hat z^*(\theta;c)=
\begin{cases}
0, & c=0\text{ or }\theta\ge c,\\[1mm]
Z, & c>0\text{ and }0\le\theta\le ce^{-cZ},\\[1mm]
\dfrac1c\log\dfrac{c}{\theta},
& c>0\text{ and }ce^{-cZ}<\theta<c.
\end{cases}
\label{eq:zstar-closed}
\end{equation}

\paragraph{Dual updates}
Let $G_B:=\max\{\bar B,C-\bar B\}$ denote the normalization scale for the budget-price update. CERO applies the normalized projected updates
\begin{align}
\label{eq:theta-mu-update-normalized}
\vartheta_i^{k+1}
=\proj_{[0,1]}\left[
\vartheta_i^k-
\gamma_\vartheta
\frac{\widetilde N_{i,k}-\zhat_{i,k+1}^*(\cbar\vartheta_i^k)/K}{\Nmax}
\right],~
\nu^{k+1}
=\proj_{[0,1]}\left[
\nu^k-
\gamma_\nu
\frac{\bar B-\sum_i\widetilde N_{i,k}}{G_B}
\right]
\end{align}
The prompt-specific update compares the virtual allocation $\widetilde N_{i,k}$ with the per-round supporting exposure $\hat z^*_{i,k+1}(\cbar\vartheta_i^k)/K$: allocating above this level decreases the supporting slope, while allocating below it increases the slope. The shared price increases when total virtual allocation exceeds $\bar B$ and decreases when it falls below $\bar B$. Both updates use the virtual allocations $\widetilde{\bm N}_k$ rather than the budget-clipped allocations. Overall, the prompt-specific dual variables adapt relative prompt priority, while the shared price regulates budget pacing across rounds.

\begin{algorithm}[t]
\caption{CERO: Cross-Epoch Adaptive Rollout Optimization}
\label{alg:cero}
\small
\begin{algorithmic}[1]
\Require Prompt pool $\{x_i\}_{i=1}^{M}$; horizon $K$; global budget $B$; round capacity $C$; per-prompt cap $N_{\max}$.
\State Set $R=B/M$, $Z=K\Nmax$, $\cbar=\lambda/R$, $\bar B=B/K$, and $G_B=\max\{\bar B,C-\bar B\}$
\State Initialize $B_1=B$, $(a_{i,1},b_{i,1})=(a_0,b_0)$, $\qhat_{i,1}$ by \eqref{eq:qhat}, $\vartheta_i^1=4\qhat_{i,1}$, and $\nu^1\in[0,1]$
\For{$k=1,\ldots,K$}
    \State Compute margins $d_{i,k}=\vartheta_i^k-\nu^k$ and virtual proposal $\widetilde{\bm N}_k$ by the top-score solution of \eqref{eq:round-proposal}
    \State Execute proposals in decreasing-margin order subject to remaining budget $B_k$, producing $\bm N_k$
    \For{$i=1,\ldots,M$}
        \State If $N_{i,k}>0$, generate $N_{i,k}^{\mathrm{resp}}=gN_{i,k}$ responses from $\pi_{\phi^k}(\cdot\mid x_i)$ and observe $S_{i,k}$ successes and $F_{i,k}$ failures; otherwise, set $S_{i,k}=F_{i,k}=0$.
        \State Update $(a_{i,k+1},b_{i,k+1})$ by \eqref{eq:discounted-beta} and compute $\qhat_{i,k+1}$
        \State Compute $\zhat_{i,k+1}^*(\cbar\vartheta_i^k)$ by \eqref{eq:zstar-closed}
        \State Update $\vartheta_i^{k+1}$ by \eqref{eq:theta-mu-update-normalized}
    \EndFor
    \State Update $\nu^{k+1}$ by \eqref{eq:theta-mu-update-normalized} and set $B_{k+1}=B_k-\sum_iN_{i,k}$
    \State If $\sum_i N_{i,k}>0$, update the policy using all executed round-$k$ groups; otherwise, set $\phi^{k+1}=\phi^k$.
    \If{$B_{k+1}=0$}
        \State Terminate.
    \EndIf
\EndFor
\end{algorithmic}
\end{algorithm}


\section{Performance Guarantees}
We now analyze CERO with respect to the surrogate allocation objective in Section~\ref{sec:optimization}, while the downstream policy performance is evaluated empirically in Section~\ref{sec:experiments}.

\paragraph{Fixed-rate benchmark.}
Let $\mathbf N^{CERO}$ be the executed allocation and define
$$
\ALG_K^{\mathrm{stat}}(\bm q)
:=\sum_{i=1}^M U(q_i,z_{i,K}^{CERO}),
\qquad
\Reg_K^{\mathrm{stat}}
:=\OPT_K^{\mathrm{stat}}(\bm q)-\ALG_K^{\mathrm{stat}}(\bm q).
$$
Let $\bar N=\frac B {KM}=\frac R K$ denote the average per-prompt per-round budget. Let $\qhat_{i,k+1}$ be the post-feedback proxy used in round-$k$ prompt-specific dual update and define the average absolute proxy discrepancy
$$
\bar e_K^{\mathrm{stat}}
:=\frac{1}{MK}\sum_{k=1}^K\sum_{i=1}^M
|\qhat_{i,k+1}-q_i|.
$$

\begin{theorem}[Fixed-rate pathwise guarantee]
\label{thm:static}
Assume \cref{eq:capacity-assumption}. For any fixed target vector $\bm q\in[0,1/4]^M$ and any realized proxy sequence $\{\hat q_{i,k+1}\}$, Algorithm~\ref{alg:cero} with $\gamma_\vartheta=\gamma_\nu=\frac1{\sqrt K}$ satisfies
\begin{equation}
\frac{\Reg_K^{\mathrm{stat}}}{M}
\le
\min\left\{1,\ 
\lambda\frac{\Nmax+G_B/M}{\bar N}\frac1{\sqrt K}
+8\lambda\frac{\Nmax}{\bar N}\bar e_K^{\mathrm{stat}}
\right\}.
\label{eq:static-bound}
\end{equation}
\end{theorem}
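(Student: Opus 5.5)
The bound $\Reg_K^{\mathrm{stat}}/M\le 1$ is immediate, since $U\in[0,1]$ and $\ALG\ge 0$. The substantive part is the second term. The plan is a standard primal–dual online-allocation argument. Weak duality from Section~\ref{sec:fenchel} controls $\OPT$ through the averaged dual iterates. Fenchel–Young-type identities express the per-round Lagrangian payoff of the virtual allocation. OGD regret bounds for $\vartheta$ and $\nu$ absorb the remaining gaps, and the proxy mismatch $\qhat_{i,k+1}$ versus $q_i$ enters only through the supporting-exposure subgradient.

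\textbf{Step 1: upper bound on OPT.} Write $\theta^k=\cbar\vartheta^k$ and $\mu^k=\cbar\nu^k$. Then
\[
\OPT_K^{\mathrm{stat}}\le \sum_k L(\bm\theta^k,\mu^k)=\sum_k\Bigl[\bar B\mu^k+\sum_i(\theta_i^k-\mu^k)\widetilde N_{i,k}-\tfrac1K\sum_iU_i^*(\theta_i^k)\Bigr],
\]
where $\widetilde N_k$ is exactly the inner maximizer by \eqref{eq:round-proposal} and the integrality of $\mathcal P_C$. Using the true-$q$ minimizer $z_i^*(\theta)$, we have $-U_i^*(\theta)=U_i(z_i^*(\theta))-\theta z_i^*(\theta)$. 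The $k$-th summand is therefore $\sum_i[\theta_i^k(\widetilde N_{i,k}-z_i^*(\theta_i^k)/K)+U_i(z_i^*(\theta_i^k))/K]+\mu^k(\bar B-\sum_i\widetilde N_{i,k})$.

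\textbf{Step 2: OGD regret against comparators.} The term $\theta_i^k(\widetilde N_{i,k}-\hat z^*_{i,k+1}(\theta_i^k)/K)$ is the linear loss minimized by the $\vartheta_i$ update. Against any fixed $\theta_i$ it is therefore at most $\theta_i\sum_k(\widetilde N_{i,k}-\hat z^*_{i,k+1}/K)+\cbar\Nmax O(\sqrt K)$, using step $1/\sqrt K$ and gradients of magnitude $\le1$ after normalization. The $\nu$ update similarly gives $\sum_k\mu^k(\bar B-\sum_i\widetilde N_{i,k})\le \mu\sum_k(\bar B-\sum_i\widetilde N_{i,k})+\cbar G_B O(\sqrt K)$. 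Summing the two produces the $\cbar(M\Nmax+G_B)\sqrt K=\lambda M(\Nmax+G_B/M)\sqrt K/R$ term. Dividing by $M$ and using $R=K\bar N$ yields $\lambda(\Nmax+G_B/M)/(\bar N\sqrt K)$, after tuning constants so the OGD bound $(D^2/\gamma+\gamma K G^2)/2$ equals $\sqrt K$.

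Next, replace $\hat z^*_{i,k+1}$ by $z_i^*$. The map $c\mapsto \hat z^*(\theta;c)$ and its effect on $\theta z-U$ need a Lipschitz bound. It suffices to show $|[\theta\hat z^*-U_i(\hat z^*)]-[\theta z^*-U_i(z^*)]|\le$ (difference between the conjugates of $U(\qhat,\cdot)$ and $U(q,\cdot)$) $\le 2\sup_z|U(\qhat,z)-U(q,z)|\le 2\cdot 4\lambda Z|\qhat-q|/R$. With $Z=K\Nmax$, $R=K\bar N$, and a factor-2 double counting, summing over $(i,k)$ and dividing by $MK\cdot(1/K)$ gives the $8\lambda(\Nmax/\bar N)\bar e_K^{\mathrm{stat}}$ term.

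\textbf{Step 3: choice of comparators and the link to ALG.} This is the main obstacle. I will choose $\theta_i$ as a supporting slope of $U_i$ at $z_i^{CERO}$ and $\mu=0$, or $\mu=\cbar$ in the budget-exhausted case. Concavity then gives $\theta_i(z_i^{CERO}-z)+U_i(z)\le U_i(z_i^{CERO})$, which collapses the Lagrangian terms to $\ALG$. The difficulty is that the OGD terms involve the virtual $\widetilde N$ while $\ALG$ uses the executed, budget-clipped $N$. I would handle this by cases. If the budget is never exhausted, then $\widetilde N=N$. Otherwise, let $\tau$ be the exhaustion round: after $\tau$ the clipped surplus $\sum_{k>\tau}\sum_i\widetilde N_{i,k}$ must be paid for by the $\mu=\cbar$ comparator, because $\sum_k(\bar B-\sum_i\widetilde N_{i,k})\le B-\sum_k\sum_i\widetilde N_{i,k}\le -(\text{clipped excess})$, and $\theta_i\le\cbar$ makes this price dominate the extra $\theta_i\widetilde N$ terms. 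Verifying that this accounting produces no extra terms beyond the two displayed constants is where I expect most of the work to lie.
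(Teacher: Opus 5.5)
Your proposal is correct. Carried through, it gives exactly the constants in \eqref{eq:static-bound}. It uses the same ingredients as the paper: optimality of $\widetilde{\bm N}_k$ over the integral polytope $\mathcal P_C$, the OGD bounds $\cbar M\Nmax\sqrt K$ and $\cbar G_B\sqrt K$, Lemma~\ref{lem:fenchel-stability} with a single factor $2$, and absorption of clipping by the price $\cbar$. You assemble them differently.

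The paper never picks a comparator. It treats $h_k(\bm\theta)=\sum_i[\theta_i\widetilde N_{i,k}-\frac1K\widehat U^*_{i,k+1}(\theta_i)]$ and $m_k(\mu)=\mu(\bar B-\widetilde A_k)$ as genuine convex losses, and lower-bounds $h_k(\bm\theta^k)+m_k(\mu^k)$ by $\frac1K\sum_i\widehat U_{i,k+1}(z_i^*)$, which is weak duality with the proxy utilities. It then evaluates the hindsight minima in closed form as $\sum_i\overline U_i(\widetilde z_i)$ and $-\cbar\widetilde V$. Only afterwards does it convert proxy to target on both sides and $\widetilde z_i$ to $z^{CERO}_{i,K}$ via the $\cbar$-Lipschitz bound.

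You start from weak duality with the true $U_i$, use only the linearized regret, and charge the proxy mismatch round by round through the Fenchel--Young gap at the iterate:
$[\theta_i^k\hat z^*-U_i(\hat z^*)]-U_i^*(\theta_i^k)\le|\widehat U^*_{i,k+1}-U_i^*|+\sup_z|\widehat U_{i,k+1}-U_i|\le 2\delta_{i,k}$.
Your comparators (a supporting slope at $z^{CERO}_{i,K}$, and $\mu\in\{0,\cbar\}$) are what the paper's hindsight minimum over $\mu$ and its Lipschitz step realize implicitly. The paper's route never touches the proxy minimizer except as a supergradient; yours dispenses with $\overline U_i$ and lands directly on $\ALG$.

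Two details need care when you write it out.

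First, keep $\hat z^*_{i,k+1}(\theta_i^k)$ inside the comparator term $\theta_i\sum_k(\widetilde N_{i,k}-\hat z^*_{i,k+1}/K)$. Apply the value-gap bound only to the iterate term, $\frac1K[\theta_i^k\hat z^*_{i,k+1}-U_i^*(\theta_i^k)]\le\frac1K[U_i(\hat z^*_{i,k+1})+2\delta_{i,k}]$, and then use your concavity inequality with $z=\hat z^*_{i,k+1}$. Swapping $\hat z^*$ for $z^*$ under the comparator slope would need a Lipschitz bound on the minimizer in $q$, which does not hold uniformly.

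Second, your Step~3 accounting closes with no leftover. With virtual iterates continued past termination, cumulative execution is $\min\{\widetilde T,B\}$, so $\sum_i(\widetilde z_i-z^{CERO}_{i,K})=\widetilde V$. This includes the partial clip in the exhaustion round, not just rounds $k>\tau$. Hence $\sum_i\theta_i(\widetilde z_i-z^{CERO}_{i,K})\le\cbar\widetilde V$ is cancelled exactly by $\cbar\sum_k(\bar B-\widetilde A_k)=-\cbar\widetilde V$. When $\widetilde T\le B$, $\mu=0$ suffices because no clipping occurs.
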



This pathwise bound separates online optimization error from reward-proxy discrepancy.
In particular, if $N_{\max}/\bar N$, $G_B/(M\bar N)$, and $\lambda$ remain
bounded and $\bar e_K^{\mathrm{stat}}=O(K^{-1/2})$, then the regret per prompt
vanishes at rate $O(K^{-1/2})$. The proof is given in Appendix ~\ref{app:static-proof}.


\paragraph{Time-varying rates and a same-path benchmark.}
For a realized rate path $\bm q_{1:K}$, define the value of round-$k$ exposure using the current rate:
$$
\Delta U_{i,k}(\mathbf N;q_{i,k})
:=U(q_{i,k},z_{i,k})-U(q_{i,k},z_{i,k-1}).
$$
Using the same $q_{i,k}$ in both terms ensures that $\Delta U_{i,k}$ reflects only the utility gained from the new exposure $N_{i,k}$, rather than changes due to variation in the rate itself. Writing $\bm q_{1:K}=(\bm q_1,\ldots,\bm q_K)$, the same-path offline benchmark evaluates every feasible allocation plan against the same realized rate path:
$$
\OPT_K^{\mathrm{dyn}}(\bm q_{1:K})
:=\max_{\mathbf N\in\mathcal N_{B,C}}
\sum_{k=1}^K\sum_{i=1}^M\Delta U_{i,k}(\mathbf N;q_{i,k}).
$$
Define $\ALG_K^{\mathrm{dyn}}$ by evaluating the executed CERO allocation in the same way, and let $\Reg_K^{\mathrm{dyn}}=\OPT_K^{\mathrm{dyn}}-\ALG_K^{\mathrm{dyn}}$. By holding the realized rate path fixed across allocation plans, the benchmark focuses specifically on the quality of allocation decisions along a common realized rate path. Set
$$
\bar e_K^{\mathrm{dyn}}
:=\frac1{MK}\sum_{k=1}^K\sum_{i=1}^M
|\qhat_{i,k+1}-q_{i,k}|,~
V_K:=\sum_{k=1}^{K-1}\norm{\bm q_{k+1}-\bm q_k}_\infty.
$$

\begin{theorem}[Same-path drifting-rate guarantee]
\label{thm:dynamic}
Under the same algorithmic setup and step sizes as in \Cref{thm:static}, for every realized rate path $\bm q_{1:K}$ and proxy sequence $\{\qhat_{i,k+1}\}$, Algorithm~\ref{alg:cero} satisfies
\begin{equation}
\frac{\Reg_K^{\mathrm{dyn}}}{M}
\le
\min\left\{\lambda,\ 
\lambda\frac{\Nmax+G_B/M}{\bar N}\frac1{\sqrt K}
+8\lambda\frac{\Nmax}{\bar N}\bar e_K^{\mathrm{dyn}}
+4\lambda\left(1+\frac{\Nmax}{\bar N}\right)V_K
\right\}.
\label{eq:dynamic-bound}
\end{equation}
\end{theorem}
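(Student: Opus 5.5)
We prove \Cref{thm:dynamic} by reducing it to the fixed-rate argument of \Cref{thm:static}. The drift of the rate path is absorbed through a telescoping comparison with a single reference rate vector.

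\textbf{Trivial bound.} The first term of the minimum comes from bounding each increment. For every plan,
\[
\sum_k\Delta U_{i,k}(\mathbf N;q_{i,k})=\sum_k\bigl[U(q_{i,k},z_{i,k})-U(q_{i,k},z_{i,k-1})\bigr].
\]
By the mean value theorem and \cref{eq:slope-bound}, each increment is at most $\cbar N_{i,k}$. Summing over $i,k$ and using the budget constraint gives a total of at most $\cbar B=\lambda M$. Since increments are nonnegative, $\ALG_K^{\mathrm{dyn}}\ge0$, and therefore $\Reg_K^{\mathrm{dyn}}/M\le\lambda$.

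\textbf{Setup for the main bound.} Fix the reference vector $\bm q_K$ (or any $\bm q_{k_0}$). For an arbitrary plan $\mathbf N$, compare the same-path value with the static value $\sum_iU(q_{i,K},z_{i,K})$. Telescoping, the difference is
\[
\sum_{k}\bigl[U(q_{i,k},z_{i,k})-U(q_{i,K},z_{i,k})\bigr]-\bigl[U(q_{i,k},z_{i,k-1})-U(q_{i,K},z_{i,k-1})\bigr].
\]
Rearranged, this becomes $\sum_{k}\bigl[U(q_{i,k},z_{i,k})-U(q_{i,k+1},z_{i,k})\bigr]$, up to boundary terms.

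\textbf{Sensitivity to the rate.} We need a bound on $|\partial_qU(q,z)|=\frac{4\lambda z}{R}e^{-4\lambda qz/R}$. This is crude in $z$, so instead we use the mixed bound
\[
|U(q,z)-U(q',z)|\le 1\wedge 4\lambda|q-q'|z/R.
\]
Weighting each rate change by the exposure $z_{i,k}\le Z$ would give a factor $K\Nmax/R=\Nmax/\bar N$ times $4\lambda\|\bm q_{k+1}-\bm q_k\|_\infty$. We then average per prompt, which is where the $\Nmax/\bar N$ scaling comes from. Doing this for both the comparator and CERO's executed plan produces the $4\lambda(1+\Nmax/\bar N)V_K$ term: the $1$ comes from the increment term (per-unit slope bound with the budget normalization), and the $\Nmax/\bar N$ from the cumulative-exposure term.

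\textbf{Remaining error terms.} After this reduction, $\OPT^{\mathrm{dyn}}-\ALG^{\mathrm{dyn}}\le \Reg^{\mathrm{stat}}(\bm q_K)+\text{drift}$. The static regret is not quite invoked as a black box, because the proxy discrepancy in \Cref{thm:static} is measured against a fixed $\bm q$. Instead we rerun its proof: the OGD regret for $\vartheta,\nu$ gives the $\lambda(\Nmax+G_B/M)/(\bar N\sqrt K)$ term unchanged, since it does not depend on the rates. In the proxy step, the per-round Fenchel gradient uses $\hat z^*(\theta;c_{i,k+1})$. We compare it to the supporting exposure under $q_{i,k}$, which yields $|\qhat_{i,k+1}-q_{i,k}|$ with the same constant $8\lambda\Nmax/\bar N$, i.e. $\bar e_K^{\mathrm{dyn}}$. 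The mismatch between $q_{i,k}$ and the reference $q_{i,K}$ is again charged to $V_K$.

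\textbf{Main obstacle.} The hardest step will be ensuring that the drift is paid only once per round in $\|\cdot\|_\infty$, rather than accumulating as $K V_K$. I would try a per-round Fenchel decomposition with round-dependent conjugates $U^*_{q_{i,k}}$. Using $\sum_k\theta(z_{i,k}-z_{i,k-1})$ together with a Lipschitz bound for $U^*$ in $q$ keeps each round's error proportional to $\|\bm q_{k+1}-\bm q_k\|_\infty$.
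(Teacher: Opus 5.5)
Your trivial bound is correct and matches the paper. The main bound does not give the stated constant, and your accounting for the $V_K$ coefficient double counts.

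With a fixed round as reference, $\bm q_K$ or any $\bm q_{k_0}$, you only have $|q_{i,k}-q_{i,K}|\le V_K$. Comparing one plan's same-path value with $\sum_iU(q_{i,K},z_{i,K})$ then costs up to $\frac{4\lambda}{R}\sum_{i,k}N_{i,k}V_K\le 4\lambda MV_K$. You pay this once for $\OPT_K^{\mathrm{dyn}}$ and once for $\ALG_K^{\mathrm{dyn}}$. Your summation-by-parts version, with weight $z_{i,k}\le Z$, is worse: it gives $4\lambda M(\Nmax/\bar N)V_K$ per plan, so $8\lambda(\Nmax/\bar N)V_K$ after doubling, before any proxy-drift charge. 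On top of that, the proxy error against $\bm q_K$ is only bounded by $\bar e_K^{\mathrm{dyn}}+V_K$, which costs another $8\lambda(\Nmax/\bar N)V_K$. At best this route yields $8\lambda(1+\Nmax/\bar N)V_K$, twice the claim.

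The missing idea is the paper's choice of reference. For each prompt take the midpoint $q_i^\circ=\frac12(\min_kq_{i,k}+\max_kq_{i,k})$, so that $|q_{i,k}-q_i^\circ|\le V_K/2$ for every $k$. Combine this with an increment-stability bound that scales with the \emph{new} exposure rather than the cumulative one:
$|[U(q,z+n)-U(q,z)]-[U(q',z+n)-U(q',z)]|\le\frac{4\lambda}{R}n|q-q'|$, which follows from $|(1-x)e^{-x}|\le1$. Using $\sum_{i,k}N_{i,k}\le B=MR$, each plan moves by at most $2\lambda MV_K$. Both plans together give $4\lambda MV_K$, which is the $4\lambda V_K$ part after dividing by $M$.

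With reference $\bm q^\circ$, you can invoke the fixed-rate theorem as a black box. It holds for every fixed target in $[0,1/4]^M$ and every realized proxy sequence. The triangle inequality gives $\frac1{MK}\sum_{i,k}|\qhat_{i,k+1}-q_i^\circ|\le\bar e_K^{\mathrm{dyn}}+V_K/2$. Multiplying $V_K/2$ by the coefficient $8\lambda\Nmax/\bar N$ gives exactly the $4\lambda(\Nmax/\bar N)V_K$ part. So the obstacle you anticipate, drift accumulating as $KV_K$, never arises, and no per-round Fenchel decomposition is needed.

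The rerun you sketch would also not close as stated:
\begin{itemize}
\item In the static proof each prompt has one slope $\theta_i$ shared across all rounds. With round-dependent conjugates, the upper-bound side becomes $\min_\theta\{\theta z-\frac1K\sum_kU^*_{q_{i,k}}(\theta)\}$. Since a minimum of a sum dominates the sum of minima, this is \emph{at least} $\frac1K\sum_kU(q_{i,k},z)$, so it cannot be bounded above by the averaged target.
\item The comparator side would be a time-average of terminal-exposure utilities, not the same-path objective.
\end{itemize}
Either way you are forced back to a fixed-reference comparison, and the midpoint is what makes its constant match.
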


Comparing with \Cref{thm:static}, the additional $V_K$ term explicitly isolates the effect of temporal drift from reward-proxy discrepancy. The proof is given in Appendix ~\ref{app:dynamic-proof}.

\section{Experiments}
\label{sec:experiments}
\subsection{Main Results}
\paragraph{Experimental setup.}
We evaluate DeepSeek-R1-Distill-Qwen-1.5B \citep{guo2025deepseek}, Qwen3-4B-Base \citep{yang2025qwen3}, and Qwen2.5-7B-Instruct \citep{qwen2025qwen25technicalreport}, training all methods on the same uniformly sampled 50\% subset of DAPO-Math-17K. Each run spans $K=500$ scheduler rounds with a training-response budget of $B^{\mathrm{resp}}=256{,}000$, and CERO uses eight responses per admitted prompt, with adaptation occurring through prompt admission and per-round group counts. We evaluate on AIME, AMC, MATH, MINERVA, and OlympiadBench using 16 responses per problem. We report avg@16 as the mean binary reward across responses and problems within each benchmark, and macro-average the five benchmark scores. Implementation details and compute measurements are provided in Appendices~\ref{sec:implem} and~\ref{app:computational_cost}, respectively.

\paragraph{Baselines.}
We evaluate four methods:
(i) \textbf{GRPO} \citep{guo2025deepseek}, a widely used reinforcement learning method for training reasoning models;
(ii) \textbf{KnapsackRL} \citep{li2025knapsack}, a representative prior method for adaptive rollout allocation, which dynamically allocates different numbers of rollouts across prompts;
(iii) \textbf{CERO}, our proposed method;
and (iv) \textbf{Random-matched CERO}, an ablation that preserves the rollout allocation produced by CERO while assigning it to a random permutation of prompts. The latter controls for the allocation pattern and isolates the effect of our prompt selection mechanism.

Table~\ref{tab:main} shows that CERO attains avg@16 macro-averages of 32.81, 45.36, and 41.81, improving over GRPO by 2.62, 2.10, and 1.59 points, respectively. It also achieves a higher macro-average than the adaptive KnapsackRL baseline on all three backbones, with the clearest improvement on DSR1-Q1.5B and smaller margins on the larger models. The Random-matched control further isolates the contribution of prompt-aware assignment. Its macro-average falls below CERO by 3.63, 1.50, and 1.63 points across the three backbones. Since Random-matched CERO preserves allocation sizes and timing while randomizing prompt assignment, this comparison supports the contribution of prompt-aware assignment under the same rollout schedule.

\begin{table*}[t]
    \centering
    \caption{
        Evaluation performance (avg@16) comparison across different models and benchmarks.
    }
    \label{tab:main}
    
    \small
    \setlength{\tabcolsep}{5pt}
    \renewcommand{\arraystretch}{1.12}

    \begin{tabular}{clcccccc}
        \toprule
        \textbf{Base Model}
        & \textbf{Method}
        & \textbf{AIME}
        & \textbf{AMC}
        & \textbf{MATH}
        & \textbf{MINERVA}
        & \textbf{Olympiad}
        & \textbf{Avg.} \\
        \midrule

        \multirow{4}{*}{\makecell[c]{DSR1-Q1.5B}}
        & GRPO
        & 5.62
        & 32.23
        & 66.12
        & 21.90
        & 25.07
        & 30.19 \\

        & KnapsackRL
        & 4.58
        & 32.38
        & 65.14
        & 20.93
        & 24.59
        & 29.53 \\

        & \ourscell{\textbf{CERO (ours)}}
        & \ourscell{\textbf{7.92}}
        & \ourscell{\textbf{38.03}}
        & \ourscell{\textbf{68.05}}
        & \ourscell{\textbf{22.66}}
        & \ourscell{\textbf{27.41}}
        & \ourscell{\textbf{32.81}} \\

        & \makecell[l]{Random-matched CERO}
        & 4.17
        & 31.02
        & 64.54
        & 21.48
        & 24.69
        & 29.18 \\

        \midrule

        \multirow{4}{*}{Qwen3-4B}
        & GRPO
        & 13.54
        & 50.98
        & 80.10
        & 30.72
        & 40.94
        & 43.26 \\

        & KnapsackRL
        & \textbf{17.50}
        & 53.24
        & 80.08
        & 31.36
        & 41.26
        & 44.69 \\

        & \ourscell{\textbf{CERO (ours)}}
        & \ourscell{17.29}
        & \ourscell{\textbf{53.69}}
        & \ourscell{\textbf{80.83}}
        & \ourscell{31.94}
        & \ourscell{\textbf{43.06}}
        & \ourscell{\textbf{45.36}} \\

        & \makecell[l]{Random-matched CERO}
        & \textbf{17.50}
        & 50.00
        & 79.13
        & \textbf{32.84}
        & 39.82
        & 43.86 \\

        \midrule
        \multirow{4}{*}{Q2.5-7B-Inst}
        & GRPO
        & 12.92
        & 48.42
        & 74.36
        & 28.06
        & 37.34
        & 40.22 \\

        & KnapsackRL
        & 13.33
        & \textbf{50.90}
        & 77.15
        & 28.72
        & \textbf{38.64}
        & 41.75 \\

        & \ourscell{\textbf{CERO (ours)}}
        & \ourscell{\textbf{15.83}}
        & \ourscell{49.77}
        & \ourscell{\textbf{77.44}}
        & \ourscell{27.55}
        & \ourscell{38.45}
        & \ourscell{\textbf{41.81}} \\

        & \makecell[l]{Random-matched CERO}
        & 11.88
        & 46.23
        & 76.35
        & \textbf{29.30}
        & 37.17
        & 40.18 \\

        \bottomrule
    \end{tabular}
\end{table*}

\subsection{Ablation and Mechanism Analysis}
\paragraph{Is the reward-variation proxy informative?}
We examine whether the proxy anticipates the reward contrast available in future rollout groups. On the DSR1-Q1.5B backbone, we pair each pre-decision prediction $\hat q_{i,k}$ with the unbiased sample variance of the eight binary rewards generated afterward, covering $32,000$ selected prompt--round groups. Across ten equal-budget intervals, higher predicted means tend to coincide with greater observed reward variation (Pearson $r=0.769$; Figure~\ref{fig:qhat-tracking}), showing that the proxy tracks changes in reward contrast during training. Across seven prediction bins, higher-prediction bins generally exhibit larger observed mean variance, with closely preserved relative ordering (Pearson $r=0.971$, Spearman $\rho=0.964$; Figure~\ref{fig:qhat_calibration_reliability}). Together, these analyses provide evidence that the proxy offers an informative pre-decision signal for subsequent reward contrast among selected prompt--round groups. Appendix~\ref{app:qhat_reliability} further examines absolute calibration.
\begin{figure}[t]
    \centering

    \begin{minipage}[t]{0.48\linewidth}
        \centering
        \includegraphics[width=\linewidth]{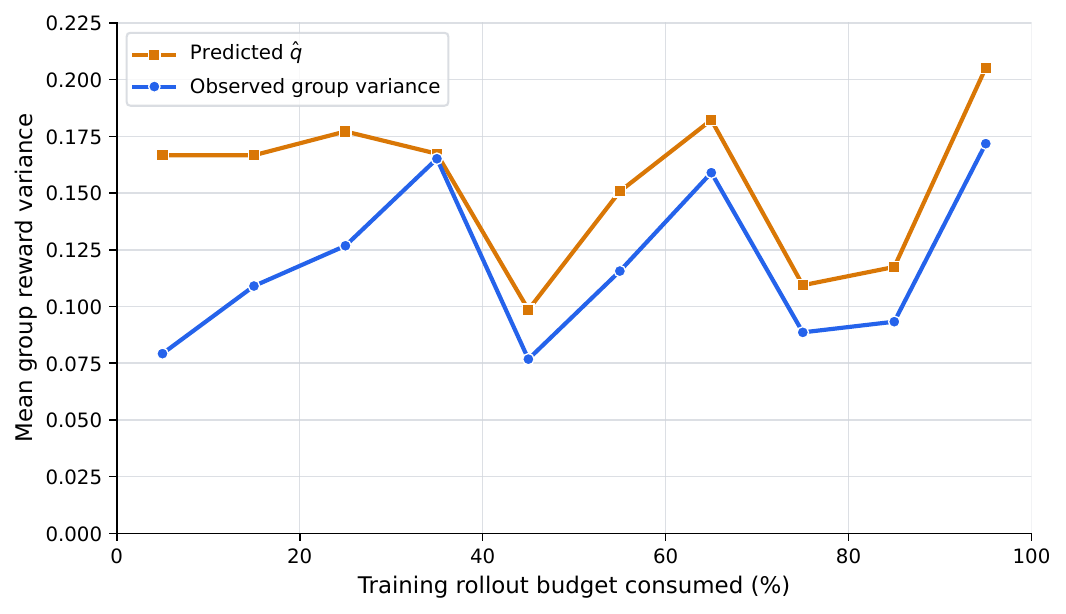}
        \setlength{\abovecaptionskip}{1pt}
        \caption{Temporal tracking of CERO's reward-variation proxy on DSR1-Q1.5B.
        }
        \label{fig:qhat-tracking}
    \end{minipage}
    \hfill
    \begin{minipage}[t]{0.48\linewidth}
        \centering
        \includegraphics[width=\linewidth]{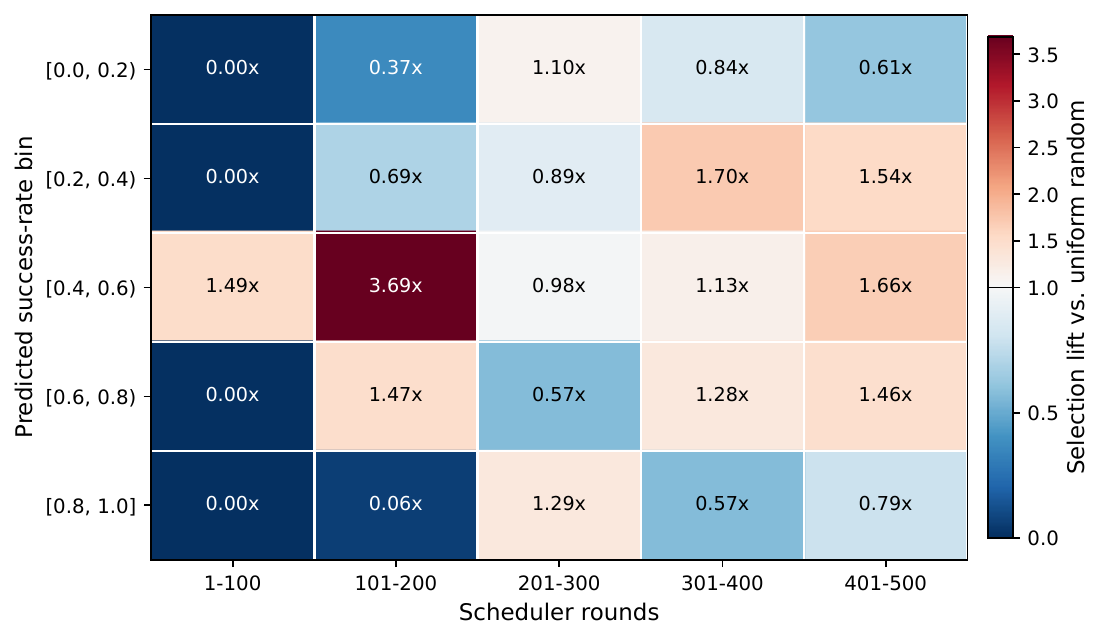}
        \setlength{\abovecaptionskip}{1pt}
        \caption{Prompt-selection lift by pre-decision predicted success rate.
        }
        \label{fig:heat}
    \end{minipage}
\end{figure}

\paragraph{Does CERO prioritize informative prompts?}
We characterize CERO's selection pattern by partitioning the 500 scheduler rounds into five 100-round stages and binning candidate prompt–round pairs by their pre-decision predicted success rate. For bin $b$ in stage $t$, we define the selection lift as
\[
L_{b,t}
=
\frac{n^{\mathrm{sel}}_{b,t}/n^{\mathrm{cand}}_{b,t}}
     {n^{\mathrm{sel}}_{t}/n^{\mathrm{cand}}_{t}},
\]
where $n^{\mathrm{sel}}$ and $n^{\mathrm{cand}}$ denote the numbers of selected and candidate prompt--round pairs. Consequently, $L_{b,t}=1$ corresponds to uniform random selection, while $L_{b,t}>1$ and $L_{b,t}<1$ indicate over-selection and under-selection respectively. 
\Cref{fig:heat} shows that CERO preferentially selects prompts with intermediate predicted success rates. Over the full run, the $[0.4,0.6)$ bin has a $1.72\times$ selection lift, while both extreme bins have approximately $0.65\times$. With the full prompt pool available in every round, the lift directly measures CERO's selection preference across predicted-success ranges relative to uniform selection. This pattern aligns with CERO's reward-variation-based design: prompts for which both success and failure remain plausible offer opportunities for the within-group reward contrast needed by group-relative learning. The strength and breadth of the preference vary across training stages, with all three interior bins over-selected in the final stage. 

\paragraph{Does prompt-aware admission increase the fraction of effective groups?}
Let $S_{i,k}$ of $G_{i,k}$ responses in group $(i,k)$ be correct. A group is effective if $0<S_{i,k}<G_{i,k}$. Define the following effective-group rate
\[
\mathrm{EGR}
:=
\frac{\sum_{i,k}\mathbf{1}\{0<S_{i,k}<G_{i,k}\}}
{\sum_{i,k}\mathbf{1}\{G_{i,k}>0\}},
\]
the fraction of executed groups supplying nonzero reward contrast.
\Cref{fig:effective} shows that under the matched budget of $256,000$ training responses, CERO produces $18,347$ effective groups out of $32,000$ executed groups, yielding an EGR of 57.3\%, compared with 52.7\% for Random-matched CERO and 48.7\% for KnapsackRL. Because Random-matched CERO preserves CERO's group sizes and spending schedule while randomizing prompt assignment, the 4.6-point gap provides controlled evidence that prompt-aware admission increases the fraction of mixed-outcome groups under the same rollout schedule.


\begin{figure}[t]
    \centering

    \begin{minipage}[t]{0.48\linewidth}
        \centering
        \includegraphics[width=\linewidth]{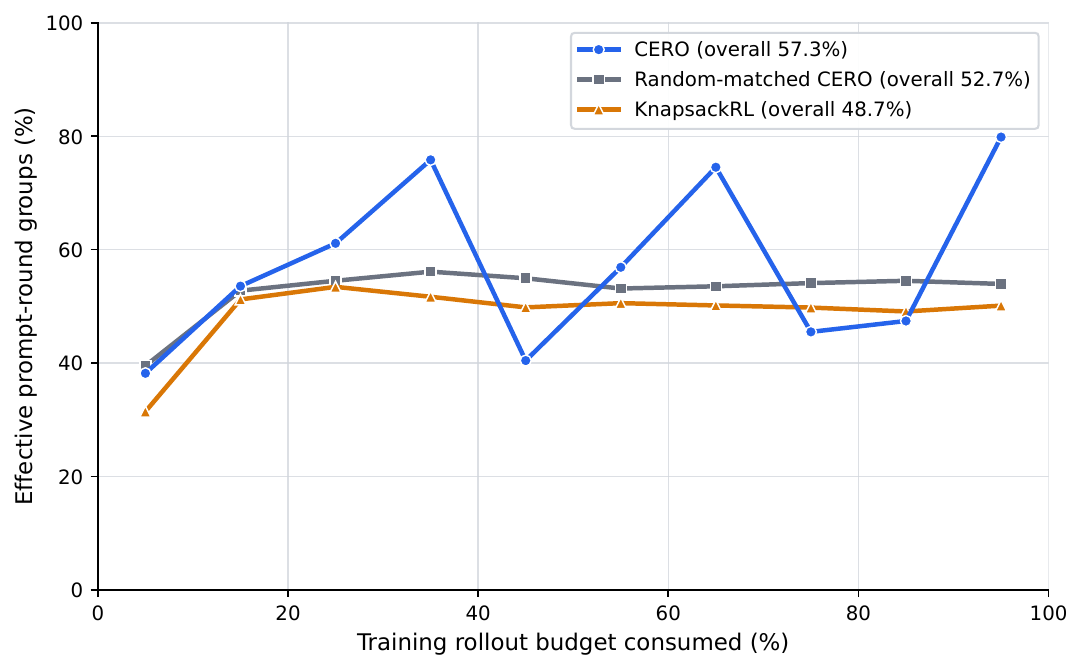}
        \setlength{\abovecaptionskip}{0.1pt}
        \caption{
        Effective-group rate over consumed rollout budget. 
        }
        \label{fig:effective}
    \end{minipage}
    \hfill
    \begin{minipage}[t]{0.48\linewidth}
        \centering
        \includegraphics[width=\linewidth]{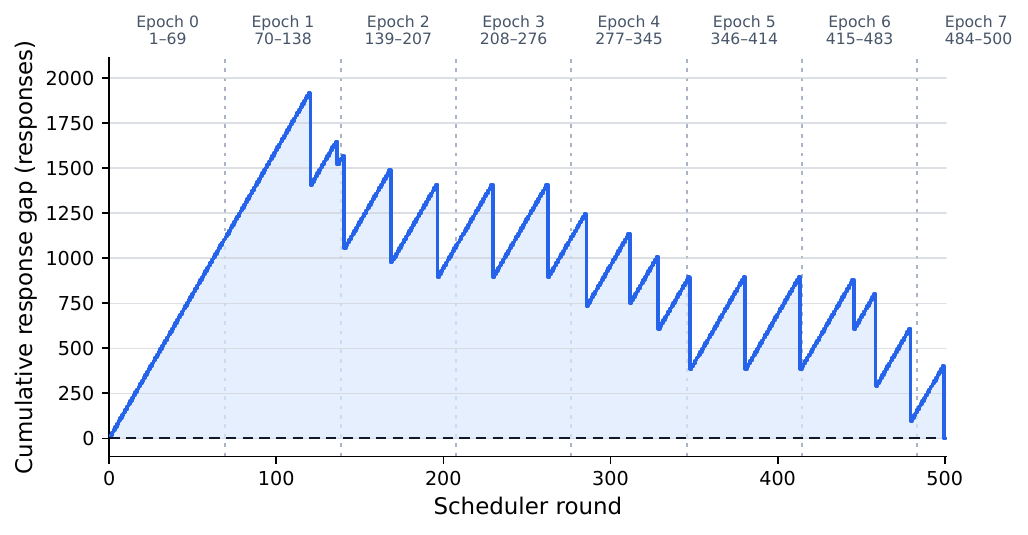}
        \setlength{\abovecaptionskip}{1pt}
        \caption{Cross-epoch budget pacing in CERO. Epoch boundaries are marked above trajectory. 
        }
        \label{fig:budget}
    \end{minipage}
\end{figure}

\paragraph{What do dual-based selection and adaptive pacing contribute?}
We first characterize CERO's budget pacing through the cumulative response gap from a uniform spending schedule:
\[
G_k^{\mathrm{exec}}
:=
\sum_{\tau=1}^{k}\sum_iN_{i,\tau}^{\mathrm{resp}}
-k\bar B^{\mathrm{resp}},
\qquad
\bar B^{\mathrm{resp}}
:=
\frac{B^{\mathrm{resp}}}{K}=512.
\]
Positive and negative gaps indicate spending ahead of and behind the uniform schedule, respectively.
\Cref{fig:budget} shows that spending deviations persist across data-epoch boundaries. The cumulative surplus peaks at 1,920 responses and returns to zero at the end of training, exhausting the 256,000-response budget exactly. This trajectory characterizes CERO’s nonuniform budget consumption.

\Cref{tab:ablation} evaluates the contributions of selection and pacing. CERO-fixed retains CERO's reward-proxy and prompt-dual updates but fixes $\nu=0$ and allocates $512$ responses per round. CERO-preset follows a predetermined front-loaded schedule: $528$ responses per round for the first $250$ rounds and $496$ for the remaining $250$ rounds. CERO-greedy retains CERO-fixed's proxy estimator, utility function, and per-round quota, but selects prompts by the marginal utility of an additional complete group rather than by prompt-dual scores.

\begin{wraptable}[11]{r}{0.36\textwidth}
\vspace{-1cm}
    \centering
    \caption{Prompt-selection and pacing ablations on DSR1-Q1.5B. Avg. denotes the five-benchmark avg@16 macro-average (mean $\pm$ std over 3 seeds).}
    \label{tab:ablation}
    \begin{tabular}{cc}
        \toprule
        \textbf{Method} & \textbf{Avg.} \\
        \midrule
        CERO & \textbf{32.26 $\pm$ 0.94} \\
        CERO-fixed & 31.13 $\pm$ 0.72 \\
        CERO-greedy & 30.97 $\pm$ 0.38 \\
        CERO-preset & 30.71 $\pm$ 0.54 \\
        \bottomrule
    \end{tabular}
\end{wraptable}
Across three training seeds, CERO achieves the highest macro-average of \(32.26\pm0.94\), outperforming CERO-fixed, CERO-preset, and CERO-greedy by $1.13$, $1.55$, and $1.29$ points on average, respectively. The gains over CERO-fixed and CERO-preset support the value of adapting budget consumption over the training horizon rather than using a fixed or predetermined spending schedule. 
Under the fixed 512-response quota, dual-guided and direct marginal-utility ranking yield similar mean scores of 31.13 and 30.97, whereas full CERO improves on both by more than one point. Together, these results support CERO's central design: jointly coordinating prompt prioritization and budget pacing over the training horizon.

\section{Conclusion}
We introduced CERO, a finite-horizon scheduler that coordinates prompt admission, repeated selection, and budget pacing under a shared rollout budget. The Fenchel-based primal--dual formulation yields tractable round-level decisions and pathwise guarantees for the surrogate allocation objective under fixed-rate and same-path time-varying benchmarks. Under matched training-response budgets, CERO achieves the highest avg@16 macro-average on all three backbones. Mechanistic analyses link its prompt choices to stronger within-group supervision, while multi-seed ablations show gains from adaptive pacing over fixed and preset schedules. Together, these results demonstrate the practical value of horizon-level rollout coordination for efficient RL post-training.

\bibliographystyle{plainnat}
\bibliography{bibliography}

\clearpage
\begin{APPENDICES}
\crefalias{section}{appendix}
\renewcommand{\theHsection}{appendix.\Alph{section}}

\section{Related Work}
\label{app:lr}
\textbf{Difficulty-aware sampling and filtering.}
Recent work has explored improving RL efficiency by prioritizing training examples and rollouts according to their difficulty or informativeness. At the query level, online curriculum and sample-selection methods dynamically adjust the training distribution to favor examples with greater learning value. Goldilocks RL \citep{mahrooghi2026goldilocks} emphasizes problems of intermediate difficulty, while SPEED-RL \citep{zhang2025speed}, SEC \citep{chen2025self}, and VADE \citep{hu2025vade} similarly use online estimates of difficulty or training utility to prioritize informative samples. Related approaches operate after rollout generation by filtering or pruning trajectories with limited contribution to policy updates. CPPO \citep{lin2025cppo} removes completions with low learning signals, PODS \citep{xu2025not} selects informative subsets of generated responses, and GFPO \citep{shrivastava2025sample} filters rollouts based on their utility relative to generation cost. These methods collectively reduce computation spent on overly easy, overly difficult, redundant, or otherwise uninformative samples.

\paragraph{Adaptive rollout allocation.}
Recent studies have increasingly treated rollout generation as a resource allocation problem, replacing the fixed group size in standard GRPO \citep{shao2024deepseekmath} with query-dependent rollout budgets. Some methods adapt allocation according to estimated difficulty, assigning more rollouts to challenging or uncertain queries and fewer to those with saturated outcomes \citep{liao-etal-2025-enhancing,zhang2025improving}. Others formulate allocation through explicit estimates of training value, uncertainty, or marginal utility, including Knapsack RL \citep{li2025knapsack}, CoBA-RL \citep{yao2026coba} and VIP \citep{nguyen2026adaptive}. A related direction makes allocation decisions progressively from intermediate rollout outcomes: Pilot-Commit \citep{kim2026spend}, AERO \citep{zhang2026train}, and VIGOR \citep{jiang2026learning} use early samples to determine whether additional generation is worthwhile. Collectively, these methods improve rollout efficiency by directing computation toward queries where additional exploration is expected to provide the greatest learning value.

\paragraph{Online resource allocation.}
Online resource allocation is a central topic in the online optimization literature and has been applied in many operational settings, including ad assignment \citep{mehta2007adwords}, revenue management \citep{talluri2006theory,jasin2012re, jiang2025degeneracy}, and online packing/covering problems \citep{buchbinder2005online,buchbinder2006improved,feldman2010online}. The existing algorithmic literature can be broadly organized according to how it exploits dual information. A dominant class of methods uses dual prices as signals for making online allocation decisions. Within this class, some papers adopt a learn-once strategy: they reserve an initial set of arrivals for estimation, compute a price vector from that sample, and then use the same prices for all future decisions \citep{devanur2009adwords,feldman2010online,molinaro2014geometry,devanur2012online}. Other papers advocate adaptive updating, where the pricing rule is revised at multiple points over time through repeated re-solving, leading to stronger robustness and improved horizon dependence \citep{agrawal2014dynamic,li2022online,chen2015dynamic}. In parallel, other works depart from this paradigm by developing primal-oriented algorithms \citep{kesselheim2014primal} or first-order schemes that do not require repeated computation of dual prices \citep{agrawal2014fast,balseiro2020dual}.

\section{Additional Results and Diagnostics}
\label{app:additional_results}

\subsection{Pass@16 Evaluation}
\label{app:best16_eval}

In addition to the avg@16 results reported in the main text, we evaluate pass@16 performance to measure success coverage under repeated sampling. Pass@16 is the percentage of problems with at least one correct response among these 16 samples:

$$
\mathrm{pass}@16
=
\frac{100}{|\mathcal D|}
\sum_{x\in\mathcal D}
\mathbf{1}\left\{
\sum_{j=1}^{16}r_{x,j}>0
\right\}.
$$

For binary rewards, this is equivalent to averaging the maximum reward among the 16 responses for each problem. 
Table~\ref{tab:best16} shows that CERO improves the pass@16 macro-average over GRPO on all three backbones, with gains of $1.71$, $0.39$, and $1.58$ points. It achieves the highest macro-average on DeepSeek-R1-Distill-Qwen-1.5B and Qwen2.5-7B-Instruct, reaching $50.99$ and $62.34$, respectively. On Qwen3-4B, KnapsackRL achieves the highest score of $67.09$, compared with $65.45$ for CERO. Together with the avg@16 results, these findings show improvements over GRPO in both average response accuracy and repeated-sampling success coverage.

\begin{table*}[t]
    \centering
    \caption{
        Evaluation performance (pass@16) comparison across different models and benchmarks.
    }
    \label{tab:best16}
    
    \small
\setlength{\tabcolsep}{5pt}
\renewcommand{\arraystretch}{1.12}

\begin{tabular}{clcccccc}
    \toprule
    \textbf{Base Model}
    & \textbf{Method}
    & \textbf{AIME}
    & \textbf{AMC}
    & \textbf{MATH}
    & \textbf{MINERVA}
    & \textbf{Olympiad}
    & \textbf{Avg.} \\
    \midrule

    \multirow{4}{*}{\makecell[c]{DSR1-Q1.5B}}
    & GRPO
    & \textbf{20.00}
    & 55.42
    & 83.40
    & \textbf{46.69}
    & 40.89
    & 49.28 \\

    & KnapsackRL
    & \textbf{20.00}
    & 56.63
    & 82.80
    & 42.65
    & 40.59
    & 48.53 \\

    & \ourscell{\textbf{CERO (ours)}}
    & \ourscell{\textbf{20.00}}
    & \ourscell{\textbf{62.65}}
    & \ourscell{\textbf{84.20}}
    & \ourscell{44.85}
    & \ourscell{\textbf{43.26}}
    & \ourscell{\textbf{50.99}} \\

    & \makecell[l]{Random-matched CERO}
    & 13.33
    & 51.81
    & 81.40
    & 45.96
    & 39.70
    & 46.44 \\

    \midrule

    \multirow{4}{*}{Qwen3-4B}
    & GRPO
    & 40.00
    & 79.52
    & 92.00
    & 51.10
    & 62.67
    & 65.06 \\

    & KnapsackRL
    & \textbf{46.67}
    & 80.72
    & \textbf{92.20}
    & 51.84
    & \textbf{64.00}
    & \textbf{67.09} \\

    & \ourscell{\textbf{CERO (ours)}}
    & \ourscell{40.00}
    & \ourscell{\textbf{83.13}}
    & \ourscell{\textbf{92.20}}
    & \ourscell{\textbf{52.21}}
    & \ourscell{59.70}
    & \ourscell{65.45} \\

    & \makecell[l]{Random-matched CERO}
    & 43.33
    & 73.49
    & 92.00
    & 51.10
    & 61.04
    & 64.19 \\

    \midrule
    \multirow{4}{*}{Q2.5-7B-Inst}
    & GRPO
    & 33.33
    & 73.49
    & 91.40
    & 47.79
    & 57.78
    & 60.76 \\

    & KnapsackRL
    & 30.00
    & 74.70
    & \textbf{92.00}
    & \textbf{48.53}
    & \textbf{60.30}
    & 61.10 \\

    & \ourscell{\textbf{CERO (ours)}}
    & \ourscell{\textbf{36.67}}
    & \ourscell{\textbf{78.31}}
    & \ourscell{89.40}
    & \ourscell{\textbf{48.53}}
    & \ourscell{58.81}
    & \ourscell{\textbf{62.34}} \\

    & \makecell[l]{Random-matched CERO}
    & 30.00
    & 71.08
    & 88.80
    & 46.69
    & 56.74
    & 58.66 \\

    \bottomrule
\end{tabular}
\end{table*}

\subsection{Reliability of the Reward-Variation Proxy}
\label{app:qhat_reliability}

We examine whether CERO's pre-decision reward-variation proxy captures differences in subsequently observed group variance, both across prediction levels and over the training trajectory.

\paragraph{Diagnostic setup.}
We analyze 32,000 executed prompt--round groups from the DSR1-Q1.5B run. For each group, we pair the pre-decision prediction $\hat q_{i,k}$ with the unbiased sample variance $v_{i,k}=\widehat{\operatorname{Var}}_8(r_{i,k})$ of the eight binary rewards generated afterward. Figure~\ref{fig:qhat_calibration_reliability} aggregates these pairs into seven nonempty prediction bins and plots the mean prediction $\bar q_\ell$ against the mean observed variance $\bar v_\ell$. Vertical error bars show nominal 95\% normal-approximation intervals,
\[
\bar v_\ell \pm 1.96\,\frac{s_\ell}{\sqrt{n_\ell}},
\]
where $n_\ell$ is the number of groups in bin $\ell$ and $s_\ell$ is the sample standard deviation of their observed group variances. These intervals use group-level standard errors within a single run, without adjustment for repeated-prompt or temporal dependence.

\paragraph{Association across prediction levels.}
Figure~\ref{fig:qhat_calibration_reliability} shows a strong positive association between predicted and observed bin means, with Pearson correlation $r=0.971$ and Spearman correlation $\rho=0.964$. Across the plotted prediction range, the observed mean group variance rises overall from approximately 0.03 to 0.20. Higher-prediction bins generally correspond to greater realized reward variation, and the ordering of the bin means is closely preserved despite a local fluctuation in the intermediate range. The proxy therefore distinguishes prediction ranges associated with substantially different levels of subsequent reward contrast. Because the predictions are recorded before the current-round responses are generated, this association reflects information available when CERO makes its allocation decisions.

\paragraph{Absolute calibration.}
The predicted values also exhibit an upward offset in numerical scale. Across the selected groups, the mean predicted variance is 0.154, compared with a mean observed variance of 0.119. Consistent with this difference, the bin means in Figure~\ref{fig:qhat_calibration_reliability} lie below the diagonal marking equality between predictions and observations. The strong correlations and the displacement from the diagonal characterize two distinct properties: the proxy captures differences across prediction levels while tending to overestimate absolute reward variance. This distinction matters for CERO because $\hat q_{i,k}$ enters the surrogate utility and prompt-dual updates numerically, rather than serving only as a ranking statistic. The diagnostic thus identifies both the information carried by the proxy and the calibration offset present in the analyzed run.

\paragraph{Temporal tracking and allocation relevance.}
Figure~\ref{fig:qhat-tracking} provides a complementary view by aggregating the same trajectory into ten consecutive equal-budget intervals of 3,200 groups each. Predicted and observed interval means co-vary over training, with Pearson correlation $r=0.769$. Their trajectories capture common decreases and rebounds, showing that the proxy reflects changes in reward variation as training progresses, in addition to the differences across prediction levels shown in Figure~\ref{fig:qhat_calibration_reliability}. This behavior is relevant to group-relative supervision: mixed-outcome groups provide positive reward variance, whereas all-correct and all-incorrect groups provide no within-group reward contrast. Together, the prediction-bin and temporal analyses support $\hat q_{i,k}$ as an informative prospective signal for CERO's allocation mechanism. They complement the selection-lift and effective-group analyses in Section~6.2, which examine how the resulting allocation translates that signal into prompt choices and usable supervision.

\begin{figure}[t]
    \centering
    \includegraphics[
        width=0.8\linewidth
    ]{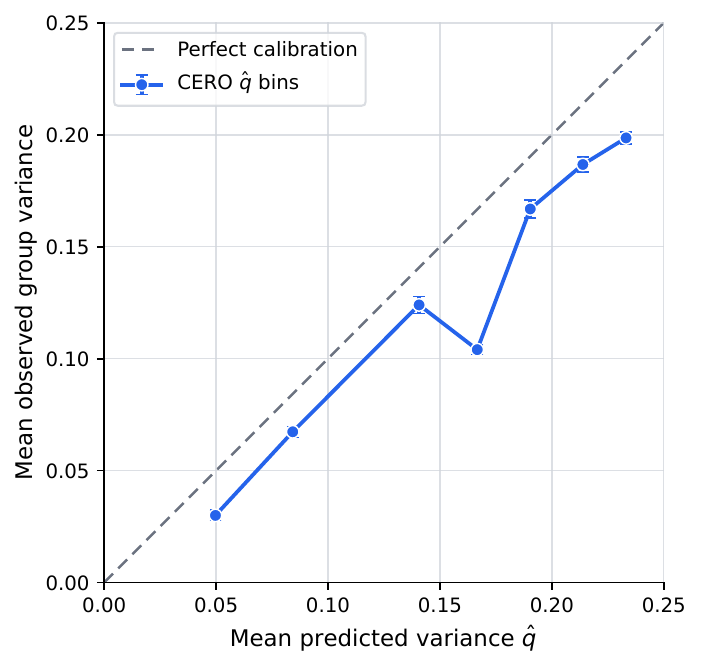}
    \caption{Calibration of CERO's reward-variation proxy on DSR1-Q1.5B. Points show predicted and observed bin means across seven prediction bins; vertical bars indicate nominal 95\% within-run intervals for the observed means. The dashed diagonal marks perfect calibration.}
    \label{fig:qhat_calibration_reliability}
\end{figure}

\subsection{Cross-Round Prompt Exposure}
To examine how CERO redistributes repeated exposure over the training horizon, we compare terminal cumulative prompt exposure with GRPO. Following Section~\ref{sec:optimization}, we normalize each prompt's terminal exposure by the equal-share scale \(R=B/M\). In Figure~\ref{fig:exposure_cdf}, the empirical CDF at \(x\) gives the fraction of prompts with \(z_{i,K}/R\leq x\). Because both methods use fixed-size response groups and the same total rollout budget, differences in terminal exposure reflect how many times each prompt is selected over the training horizon rather than differences in group size or aggregate exposure.
\label{app:exposure}
\begin{figure*}[t]
    \centering
    \includegraphics[
        width=\textwidth
    ]{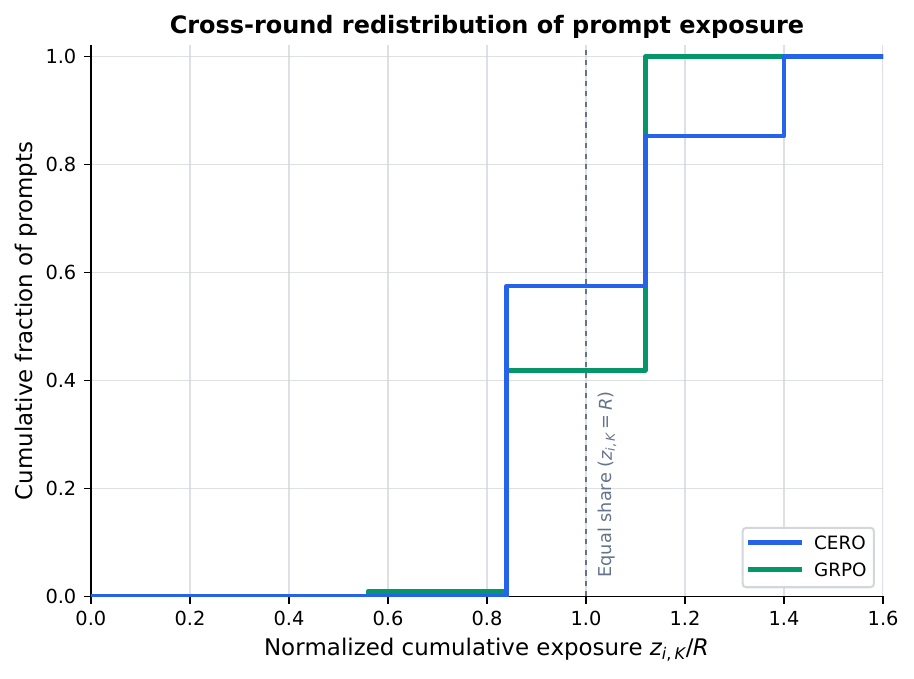}
    \caption{Empirical CDF of normalized terminal prompt exposure \(z_{i,K}/R\) for CERO and GRPO on DSR1-Q1.5B. The dashed line marks the equal-share exposure $R=B/M$.}
    \label{fig:exposure_cdf}
\end{figure*}

Figure~\ref{fig:exposure_cdf} shows that CERO induces a broader terminal-exposure distribution than GRPO. CERO spreads terminal exposure across three levels, with 57.5\% of prompts at \(0.84R\), 27.8\% at \(1.12R\), and a 14.7\% high-exposure tail at \(1.40R\). GRPO is more concentrated, with 41.0\% and 58.1\% of prompts at \(0.84R\) and \(1.12R\), respectively. Thus, CERO shifts exposure away from the more concentrated GRPO pattern: a larger fraction of prompts receive below-equal-share exposure, while a smaller subset is revisited sufficiently often to reach substantially above the equal-share level. Since both methods use the same total rollout budget, this broader distribution reflects a redistribution of repeated selections across prompts rather than additional aggregate exposure. In particular, CERO trades fewer revisits to some prompts for more persistent revisiting of others over the training horizon. 

\subsection{Computational Cost}
\label{app:computational_cost}

We compare the training cost of CERO, Random-matched CERO, KnapsackRL, and
vanilla GRPO. For every backbone, all methods use the same number of GPUs,
500 scheduler rounds, and a matched budget of 256,000 training responses.
End-to-end GPU-hours are calculated as the Slurm elapsed time multiplied by
the number of allocated GPUs. This measurement includes initialization,
in-training validation, and checkpointing.

Figure~\ref{fig:compute_time_comparison} provides a detailed comparison for
the 1.5B backbone. In addition to end-to-end GPU-hours, we report core
training time normalized by the number of generated response tokens. The
normalized measurement excludes validation and checkpointing but includes
rollout generation, reward computation, log-probability computation, policy
optimization, and allocation-related operations.

\begin{figure*}[t]
    \centering
    \includegraphics[
        width=\textwidth
    ]{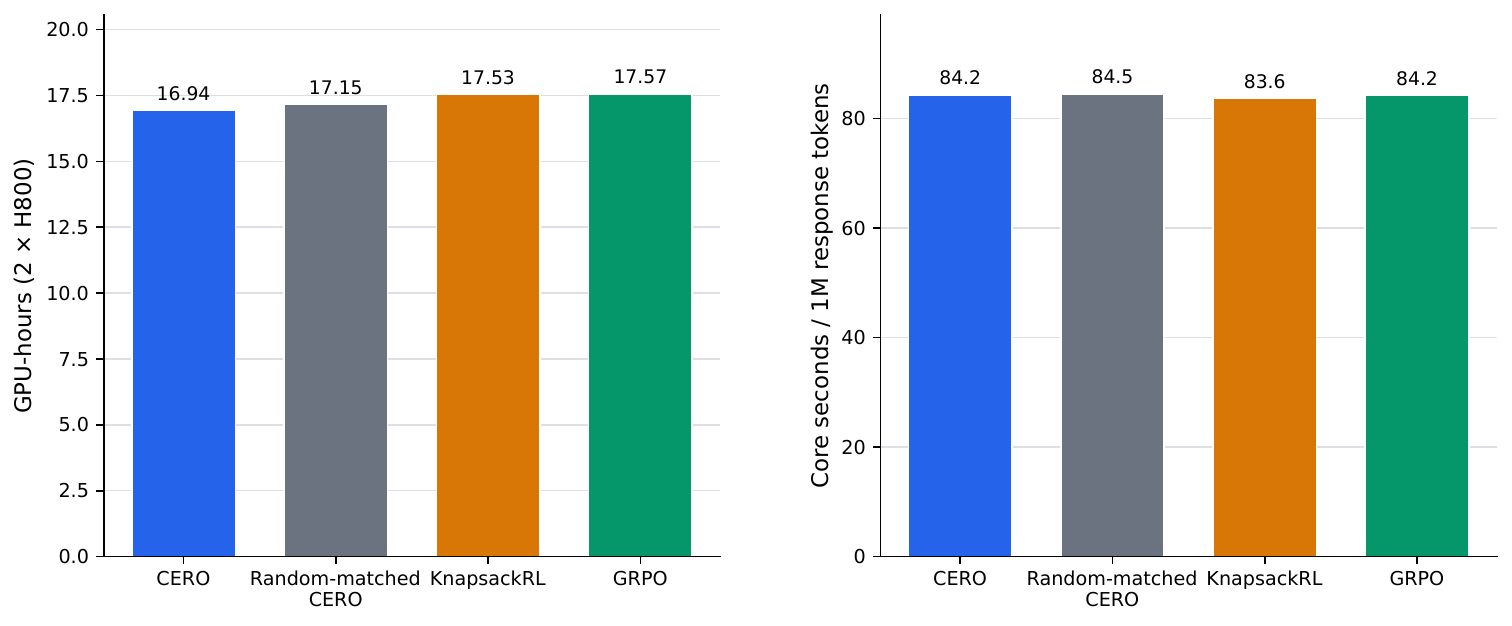}
    \caption{Training-compute comparison on DeepSeek-R1-Distill-Qwen-1.5B under a matched response budget. The left panel reports end-to-end GPU-hours, while the right panel reports core training time normalized by generated response tokens.}
    \label{fig:compute_time_comparison}
\end{figure*}

On the 1.5B backbone, CERO, Random-matched CERO, KnapsackRL, and vanilla GRPO
consume $16.94$, $17.15$, $17.53$, and $17.57$ GPU-hours, respectively.
Their token-normalized core costs are $84.2$, $84.5$, $83.6$, and $84.2$
seconds per million generated response tokens. The normalized measurements
differ by only approximately $1.1\%$ of their mean value, with CERO being
nearly identical to vanilla GRPO. CERO therefore combines adaptive allocation with token-normalized core training cost comparable to the baselines in the 1.5B experiment.

We also measure end-to-end GPU-hours for the larger backbones. On
Qwen3-4B-Base, CERO uses $36.31$ GPU-hours, Random-matched CERO uses $36.78$
GPU-hours, KnapsackRL uses $35.85$ GPU-hours, and vanilla GRPO uses $36.89$
GPU-hours. These values correspond to wall-clock times of $9.08$, $9.19$,
$8.96$, and $9.22$ hours, respectively, using four H800 GPUs. The differences
among the four methods are within approximately $3\%$ of their mean compute
cost.

On Qwen2.5-7B-Instruct, CERO, Random-matched CERO, KnapsackRL, and vanilla
GRPO use $31.46$, $32.78$, $34.99$, and $31.72$ GPU-hours, respectively.
Their corresponding wall-clock times are $7.86$, $8.19$, $8.75$, and $7.93$
hours on four H800 GPUs. CERO therefore remains comparable to the baselines
in end-to-end computational cost as the backbone size increases.

The end-to-end measurements capture the practical cost of each complete training protocol under the matched response budget, including differences in response lengths and update counts. CERO and Random-matched CERO perform 488, 486, and 487 policy updates on the 1.5B, 4B, and 7B backbones, respectively, while GRPO and KnapsackRL perform 500. Zero-allocation rounds account for the lower update counts.

Each cost measurement is obtained from one completed run per method and backbone. Across the reported runs, CERO combines adaptive prompt selection and cross-epoch budget pacing with comparable end-to-end training cost. On the 1.5B backbone, its token-normalized core cost matches GRPO at 84.2 seconds per million generated response tokens.

\section{Utility Properties and Full-Information Oracle}
\subsection{Exact Conjugate, Fenchel Representation, and Stability}
\label{app:fenchel}
Fix $q\in[0,1/4]$, set $c=4\lambda q/R$, and write $U_c(z)=1-e^{-cz}$ on $[0,Z]$. Define
\begin{equation}
U_{c,Z}^*(\theta):=\inf_{0\le z\le Z}\{\theta z-U_c(z)\}.
\end{equation}

\begin{lemma}[Closed-form compact conjugate]
\label{lem:conjugate}
If $c=0$, then $U_{c,Z}^*(\theta)=0$ for every $\theta\ge0$. If $c>0$, then
\begin{equation}
U_{c,Z}^*(\theta)=
\begin{cases}
\theta Z-1+e^{-cZ}, & 0\le\theta\le ce^{-cZ},\\[1mm]
\dfrac{\theta}{c}\log\dfrac{c}{\theta}+\dfrac{\theta}{c}-1,
& ce^{-cZ}<\theta<c,\\[2mm]
0, & \theta\ge c.
\end{cases}
\label{eq:conjugate-closed}
\end{equation}
The minimizer is given by \eqref{eq:zstar-closed}.
\end{lemma}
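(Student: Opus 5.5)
The plan is to minimize the one-dimensional function $\phi(z):=\theta z-U_c(z)=\theta z-1+e^{-cz}$ over the compact interval $[0,Z]$, using convexity to locate the minimizer. Then substitute that minimizer back to get the value.

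\textbf{The case $c=0$.} Here $U_c\equiv 0$, so $\phi(z)=\theta z$. For $\theta\ge0$ this is nondecreasing, hence minimized at $z=0$ with value $0$. This matches the first line of \eqref{eq:zstar-closed}; when $\theta=0$ every point is a minimizer and $z=0$ is a valid selection.

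\textbf{The case $c>0$.} Now $\phi$ is strictly convex on $[0,Z]$ because $\phi''(z)=c^2e^{-cz}>0$. Its derivative $\phi'(z)=\theta-ce^{-cz}$ is strictly increasing, running from $\theta-c$ at $z=0$ to $\theta-ce^{-cZ}$ at $z=Z$. The minimizer over the interval is therefore the projection onto $[0,Z]$ of the unconstrained stationary point $z_0=\frac1c\log\frac c\theta$, which exists when $\theta>0$. I split into three regimes.
\begin{itemize}
\item If $\theta\ge c$, then $\phi'(0)\ge0$, so $\phi$ is nondecreasing and the minimizer is $z=0$, with value $\phi(0)=0$.
\item If $0\le\theta\le ce^{-cZ}$, then $\phi'(Z)\le0$, so $\phi$ is nonincreasing and the minimizer is $z=Z$, with value $\theta Z-1+e^{-cZ}$. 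This regime includes $\theta=0$, where $\phi'<0$ throughout.
\item If $ce^{-cZ}<\theta<c$, then $\phi'(0)<0<\phi'(Z)$, so the minimizer is interior. It is $z_0=\frac1c\log\frac c\theta\in(0,Z)$, where $e^{-cz_0}=\theta/c$.
\end{itemize}
Substituting $z_0$ in the interior regime gives
\[
\phi(z_0)=\frac{\theta}{c}\log\frac{c}{\theta}-1+\frac{\theta}{c},
\]
which is the middle line of \eqref{eq:conjugate-closed}. The minimizers found in the three regimes coincide with \eqref{eq:zstar-closed}.

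\textbf{Checks at the boundaries.} As a consistency check, the formulas agree where the regimes meet. At $\theta=c$ the interior value tends to $0$. At $\theta=ce^{-cZ}$ it equals $e^{-cZ}Z-1+e^{-cZ}$, which matches $\theta Z-1+e^{-cZ}$.

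\textbf{Main obstacle.} There is no real obstacle. The only care needed is at the edge cases: the value $\theta=0$, which lies in the regime $0\le\theta\le ce^{-cZ}$ and so has minimizer $Z$ with no logarithm required, and keeping the strict versus non-strict inequalities consistent with \eqref{eq:zstar-closed}.
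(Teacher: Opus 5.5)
Your proof is correct and follows the same route as the paper's: the strictly increasing derivative $\theta-ce^{-cz}$ splits the problem into the three regimes, the minimizer is substituted back, and $c=0$ is handled directly. One small slip in your optional boundary check: at $\theta=ce^{-cZ}$ the interior formula gives $cZe^{-cZ}-1+e^{-cZ}$ (you dropped the factor $c$), which equals $\theta Z-1+e^{-cZ}$, so the check still goes through.
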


\begin{proof}
For $c>0$, differentiate $\theta z-1+e^{-cz}$. Its derivative is $\theta-ce^{-cz}$ and is strictly increasing in $z$. If $\theta\le ce^{-cZ}$, the objective is nonincreasing on $[0,Z]$ and the minimum is at $Z$. If $ce^{-cZ}<\theta<c$, the unique root is $z=c^{-1}\log(c/\theta)$. If $\theta\ge c$, the minimum is at $0$. Substitution yields \eqref{eq:conjugate-closed}. The case $c=0$ is immediate.
\end{proof}

We also write $U_q^*(\theta):=U_{4\lambda q/R,Z}^*(\theta)$ for the compact conjugate parameterized by $q$; in particular, $U_i^*(\theta)=U_{q_i}^*(\theta)$ in Section~2.3.
\begin{lemma}[Exact compact biconjugate and stability]
\label{lem:fenchel-stability}
For every $q\in[0,1/4]$ and $z\in[0,Z]$,
\begin{equation}
U(q,z)=\min_{0\le\theta\le\cbar}\{\theta z-U_q^*(\theta)\}.
\end{equation}
Moreover, for $q,q'\in[0,1/4]$,
\begin{align}
\sup_{0\le z\le Z}|U(q,z)-U(q',z)|
&\le \frac{4\lambda Z}{R}|q-q'|,\\
\sup_{0\le\theta\le\cbar}|U_q^*(\theta)-U_{q'}^*(\theta)|
&\le \frac{4\lambda Z}{R}|q-q'|.
\label{eq:stability-appendix}
\end{align}
\end{lemma}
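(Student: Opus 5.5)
The plan splits into two parts: the biconjugate identity, then the two stability bounds. For the identity, fix $q$, set $c=4\lambda q/R\in[0,\cbar]$, and write $U_c(z)=1-e^{-cz}$.

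\emph{Upper bound.} By the definition of $U_q^*$ as an infimum over $s\in[0,Z]$, we have $U_q^*(\theta)\le\theta z-U_c(z)$ for every $z\in[0,Z]$ and every $\theta$. Hence $\theta z-U_q^*(\theta)\ge U_c(z)$ for all $\theta\in[0,\cbar]$, so the minimum over $\theta$ is at least $U(q,z)$.

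\emph{Attainment.} I take the supporting slope $\theta_z:=U_c'(z)=ce^{-cz}$. It lies in $[0,c]\subseteq[0,\cbar]$ by \eqref{eq:slope-bound}, because $c\le\lambda/R=\cbar$. The function $s\mapsto\theta_z s-U_c(s)$ is convex with derivative $\theta_z-ce^{-cs}$, which vanishes at $s=z\in[0,Z]$. So $z$ minimizes it over $[0,Z]$, and $U_q^*(\theta_z)=\theta_z z-U_c(z)$, i.e., equality holds at $\theta_z$.

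The endpoint cases need separate checks. If $c=0$, then $\theta_z=0$, $U_q^*\equiv0$ and $U\equiv0$, so the identity is trivial. The case $z=Z$ is covered by the first-order argument, and is consistent with \eqref{eq:zstar-closed} because $\theta_z=ce^{-cZ}$ falls in the boundary regime. Lemma~\ref{lem:conjugate} is not needed for this part, only the definition.

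For the first stability bound, write $c=4\lambda q/R$ and $c'=4\lambda q'/R$. Then
\[
|U(q,z)-U(q',z)|=|e^{-c'z}-e^{-cz}|\le z|c-c'|\le Z\frac{4\lambda}{R}|q-q'|,
\]
using the mean value theorem and $|\tfrac{d}{dc}e^{-cz}|=ze^{-cz}\le z$.

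The conjugate bound follows from the standard fact that infima of uniformly close families are close. For each $\theta$ and $s$,
\[
\theta s-U(q,s)\le \theta s-U(q',s)+\sup_{s'}|U(q,s')-U(q',s')|.
\]
Taking the infimum over $s\in[0,Z]$ gives $U_q^*(\theta)\le U_{q'}^*(\theta)+\frac{4\lambda Z}{R}|q-q'|$. Swapping the roles of $q$ and $q'$ gives the reverse inequality, and since the bound is uniform in $\theta$ the supremum over $[0,\cbar]$ follows.

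I expect no real obstacle here. The only point needing care is that $\theta_z\in[0,\cbar]$ for every admissible $q$, which is where the normalization $q\le1/4$ and the definition $\cbar=\lambda/R$ are used.
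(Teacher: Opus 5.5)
Your proposal is correct and follows essentially the same route as the paper. Both arguments establish three things. First, equality holds at the supporting slope $\theta_z=\partial_zU(q,z)\in[0,\cbar]$; your first-order condition for the convex map $s\mapsto\theta_z s-U_c(s)$ is the paper's concavity supporting-line inequality in different words, and the reverse inequality comes directly from the definition of the conjugate. Second, the first stability bound follows from a mean-value bound on the derivative with respect to the rate. Third, the conjugate bound follows by applying a common uniform perturbation before taking the infimum over $z\in[0,Z]$.
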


\begin{proof}
The slope of $U(q,\cdot)$ lies in $[0,\cbar]$. For a fixed $z$, choose the supporting slope $\theta_z=\partial_zU(q,z)$, with one-sided derivatives at the endpoints. Concavity gives
\begin{equation}
U(q,s)\le U(q,z)+\theta_z(s-z),
\end{equation}
so $U_q^*(\theta_z)=\theta_zz-U(q,z)$. The reverse inequality follows directly from the definition of the conjugate.

For stability,
\begin{equation}
|\partial_qU(q,z)|
=\frac{4\lambda z}{R}e^{-4\lambda qz/R}
\le\frac{4\lambda Z}{R}.
\end{equation}
The first bound follows from the mean-value theorem. Applying a common uniform perturbation before taking the infimum over $z\in[0,Z]$ gives the conjugate bound.
\end{proof}

\subsection{Full-Information Oracle}
\label{app:full-information-oracle}

We characterize the fixed-rate full-information benchmark when the target rates $q=(q_1,\ldots,q_M)$ are known. Since the objective depends only on terminal prompt exposures, the oracle can first determine the optimal exposure vector and then distribute these exposures across the $K$ rounds.

\paragraph{Terminal-exposure formulation.}
Writing $z_i=\sum_{k=1}^{K}N_{i,k}$ and $c_i=4\lambda q_i/R$, the fixed-rate benchmark can be expressed as
\begin{equation}
\mathrm{OPT}^{\mathrm{stat}}_K(q)
=
\max_{\substack{
z\in\mathbb{Z}_{\ge 0}^{M}\\
z_i\le Z,\ \sum_i z_i\le B
}}
\sum_{i=1}^{M}\left(1-e^{-c_i z_i}\right).
\label{eq:oracle-terminal}
\end{equation}
Every feasible rollout plan induces an exposure vector satisfying these constraints. Conversely, every such integer vector admits a feasible round-level realization, as shown below. For the continuous relaxation, the uniform assignment $N_{i,k}=z_i/K$ is feasible because $z_i\le KN_{\max}$ and $\sum_i z_i\le B\le KC$.

\paragraph{Continuous water-filling solution.}
Relaxing the integrality constraint gives a separable concave optimization problem. Let $\xi\ge0$ be the multiplier for the total budget. For $c_i>0$, an interior optimum satisfies
$c_i e^{-c_i z_i}=\xi$.
Consequently, for $\xi>0$, the optimal exposure has the clipped water-filling form
\begin{equation}
z_i^{\mathrm{cont}}(\xi)
=
\left[
\frac{1}{c_i}\log\frac{c_i}{\xi}
\right]_{[0,Z]},
\qquad c_i>0,
\label{eq:oracle-water-filling}
\end{equation}
where $[x]_{[0,Z]}=\min\{Z,\max\{0,x\}\}$.

Let $\mathcal I_+=\{i:c_i>0\}$. When $0<B<Z|\mathcal I_+|$, the budget is binding and $\xi$ can be chosen so that
$\sum_{i\in\mathcal I_+}z_i^{\mathrm{cont}}(\xi)=B$.
The left-hand side is continuous and nonincreasing in $\xi$, allowing the price to be found by one-dimensional bisection. Prompts with $c_i=0$ contribute zero utility and may be assigned zero exposure. When $B\ge Z|\mathcal I_+|$, assigning $Z$ units to every positive-rate prompt is optimal; any remaining budget may be left unused or assigned to zero-rate prompts without changing the objective.

\paragraph{Exact integer allocation.}
For integer exposures, the marginal utility of the $\ell$-th allocation unit assigned to prompt $i$ is
\begin{equation}
\Delta_i(\ell)
:=
U(q_i,\ell)-U(q_i,\ell-1)
=
(1-e^{-c_i})e^{-c_i(\ell-1)},
\qquad \ell=1,\ldots,Z.
\label{eq:oracle-discrete-marginal}
\end{equation}
These marginal gains are nonnegative and nonincreasing in $\ell$. Starting from $z_i=0$ for every prompt, the oracle repeatedly assigns one unit to an unsaturated prompt with the largest currently available gain $\Delta_i(z_i+1)$. Repeating this operation for $B$ units produces an optimal integer exposure vector, allowing zero-gain assignments when necessary.

The optimality follows from the decreasing marginal-gain sequences. A set of the $B$ largest gains can be chosen to contain a prefix of each prompt's sequence, with ties resolved in favor of earlier units. The greedy procedure constructs such a set while respecting $z_i\le Z$. A max-heap containing the next available marginal gain for each unsaturated prompt computes the exposure vector in $O(M+B\log(M+1))$ time and $O(M)$ auxiliary memory.

\paragraph{Recovery of a feasible rollout plan.}
To realize any feasible integer exposure vector $z$, let $T=\sum_i z_i$ and list its $T$ allocation units, placing the units for each prompt in one consecutive block. Assign the $r$-th unit in this list to round $1+((r-1)\bmod K)$. Each prompt's units are thereby spread as evenly as possible across rounds, and the same is true of the total allocation. The resulting counts satisfy
\begin{equation}
\sum_{k=1}^{K}N_{i,k}=z_i,
\qquad
N_{i,k}\le
\left\lceil\frac{z_i}{K}\right\rceil
\le N_{\max},
\qquad
\sum_iN_{i,k}\le
\left\lceil\frac{T}{K}\right\rceil
\le C.
\label{eq:oracle-round-feasibility}
\end{equation}
The final inequalities use $z_i\le KN_{\max}$, $T\le B\le KC$, and the integrality of $N_{\max}$ and $C$. Thus, the exposure-level optimum is attainable under the original global-budget and per-round capacity constraints.

\paragraph{Full-capacity boundary.}
When $B=KMN_{\max}$ and $C=MN_{\max}$, assigning $N_{i,k}=N_{\max}$ to every prompt in every round is feasible and yields $z_{i,K}=Z$. Since each utility is nondecreasing in exposure, this allocation attains
\[
\mathrm{OPT}^{\mathrm{stat}}_K(q)
=
\sum_{i=1}^{M}U(q_i,Z).
\]
The full-allocation rule therefore achieves zero fixed-rate regret without estimating the target rates.

The oracle makes the diminishing-return structure explicit: additional units are assigned according to their remaining marginal utility rather than a prompt's initial rate alone. It characterizes the fixed-rate benchmark, while CERO uses reward-derived proxy states and online dual updates to allocate rollouts as feedback becomes available.

\section{Proof of the Fixed-Rate Guarantee}
\label{app:static-proof}
For the analysis, if the execution budget is exhausted before round \(K\), we continue the virtual iterates through the remaining rounds under the same update rules, with all subsequent executed allocations set to zero.

For each round $k$, let $\widehat U_{i,k+1}$ be the post-feedback proxy utility and $\widehat U_{i,k+1}^*$ its compact conjugate. Let $\widetilde{\bm N}_k$ be the virtual proposal and $\widetilde A_k=\sum_i\widetilde N_{i,k}$. Work with the unnormalized duals $\theta_i^k=\cbar\vartheta_i^k$ and $\mu^k=\cbar\nu^k$. The normalized updates in \eqref{eq:theta-mu-update-normalized} are equivalent to projected subgradient descent with
\begin{equation}
\alpha_\theta=\frac{\cbar}{\Nmax\sqrt K},
\qquad
\alpha_\mu=\frac{\cbar}{G_B\sqrt K}.
\label{eq:unnormalized-steps}
\end{equation}

Define convex losses, after the round-$k$ proposal and proxy utilities have been realized, by
\begin{align}
h_k(\bm\theta)
&:=\sum_{i=1}^M\left[
\theta_i\widetilde N_{i,k}-\frac1K\widehat U_{i,k+1}^*(\theta_i)
\right],\\
m_k(\mu)
&:=\mu(\bar B-\widetilde A_k).
\label{eq:dual-losses}
\end{align}
Any $\zhat_{i,k+1}^*(\theta_i^k)$ is a supergradient of the concave conjugate at $\theta_i^k$, so
\begin{equation}
g_{\theta_i}^k=\widetilde N_{i,k}-\frac1K\zhat_{i,k+1}^*(\theta_i^k)
\end{equation}
is a subgradient of $h_k$. Also $g_\mu^k=\bar B-\widetilde A_k$ is the gradient of $m_k$. The gradient bounds are
\begin{equation}
\norm{\bm g_\theta^k}_2\le\sqrt M\Nmax,
\qquad
|g_\mu^k|\le G_B.
\end{equation}
The domain diameters are $\cbar\sqrt M$ and $\cbar$. Standard projected-OGD analysis therefore gives
\begin{align}
\sum_{k=1}^Kh_k(\bm\theta^k)
-\min_{\bm\theta\in[0,\cbar]^M}\sum_{k=1}^Kh_k(\bm\theta)
&\le \cbar M\Nmax\sqrt K,\\
\sum_{k=1}^Km_k(\mu^k)
-\min_{\mu\in[0,\cbar]}\sum_{k=1}^Km_k(\mu)
&\le \cbar G_B\sqrt K.
\label{eq:ogd-bounds}
\end{align}
These inequalities are pathwise; the losses may depend on the realized feedback and current proposal.

Define
\begin{equation}
\widetilde z_i:=\sum_{k=1}^K\widetilde N_{i,k},
\qquad
\widetilde T:=\sum_i\widetilde z_i,
\qquad
\widetilde V:=[\widetilde T-B]_+.
\end{equation}
Let
\begin{align}
\overline U_i^*(\theta)&:=\frac1K\sum_{k=1}^K\widehat U_{i,k+1}^*(\theta),\\
\overline U_i(z)&:=\min_{0\le\theta\le\cbar}\{\theta z-\overline U_i^*(\theta)\}.
\end{align}
Separability gives
\begin{align}
\min_{\bm\theta}\sum_{k=1}^Kh_k(\bm\theta)
&=\sum_{i=1}^M\overline U_i(\widetilde z_i),\\
\min_{0\le\mu\le\cbar}\sum_{k=1}^Km_k(\mu)
&=-\cbar\widetilde V.
\end{align}
Hence
\begin{equation}
\sum_{k=1}^K[h_k(\bm\theta^k)+m_k(\mu^k)]
\le
\sum_i\overline U_i(\widetilde z_i)
+\cbar(M\Nmax+G_B)\sqrt K
-\cbar\widetilde V.
\label{eq:ogd-upper}
\end{equation}

Let $\mathbf N^*$ attain \eqref{eq:static-opt}, define $z_i^*=\sum_kN_{i,k}^*$, and set $n_i^*=z_i^*/K$. Then $0\le n_i^*\le\Nmax$ and
\begin{equation}
\sum_i n_i^*=\frac1K\sum_i z_i^*\le\frac BK=\bar B\le C.
\end{equation}
Thus $\bm n^*\in\mathcal P_C$. Since $\mathcal P_C$ is integral, the integer proposal maximizes the same linear function over the continuous polytope. Therefore
\begin{equation}
\sum_i(\theta_i^k-\mu^k)\widetilde N_{i,k}
\ge
\sum_i(\theta_i^k-\mu^k)n_i^*.
\end{equation}
Using $\mu^k\ge0$, $\sum_i n_i^*\le\bar B$, and the proxy Fenchel representation yields
\begin{align}
h_k(\bm\theta^k)+m_k(\mu^k)
&\ge
\frac1K\sum_i\left[
\theta_i^kz_i^*-\widehat U_{i,k+1}^*(\theta_i^k)
\right]\\
&\ge\frac1K\sum_i\widehat U_{i,k+1}(z_i^*).
\label{eq:round-lower}
\end{align}
Summing and combining with \eqref{eq:ogd-upper} gives
\begin{equation}
\sum_i\frac1K\sum_k\widehat U_{i,k+1}(z_i^*)
\le
\sum_i\overline U_i(\widetilde z_i)
+\cbar(M\Nmax+G_B)\sqrt K
-\cbar\widetilde V.
\label{eq:proxy-comparison}
\end{equation}

Let $e_{i,k}=|\qhat_{i,k+1}-q_i|$ and
\begin{equation}
\delta_{i,k}:=\frac{4\lambda Z}{R}e_{i,k},
\qquad
\Delta:=\frac1K\sum_{i,k}\delta_{i,k}.
\end{equation}
By \Cref{lem:fenchel-stability}, both the proxy utility and its conjugate differ from their target versions by at most $\delta_{i,k}$. Consequently,
\begin{equation}
\OPT_K^{\mathrm{stat}}(\bm q)
-\sum_iU(q_i,\widetilde z_i)
+\cbar\widetilde V
\le
\cbar(M\Nmax+G_B)\sqrt K+2\Delta.
\label{eq:virtual-master}
\end{equation}
Since $Z=K\Nmax$,
\begin{equation}
\Delta
=\frac{4\lambda\Nmax}{R}\sum_{i,k}e_{i,k}
=\frac{4\lambda M\Nmax}{\bar N}\bar e_K^{\mathrm{stat}}.
\label{eq:delta-e}
\end{equation}

It remains to pass from virtual proposals to executed allocations. The prefix clipping rule implies
\begin{equation}
\sum_i(\widetilde z_i-z_{i,K}^{CERO})=\widetilde V.
\end{equation}
Every target utility has derivative at most $\cbar$, so
\begin{equation}
0\le\sum_i[U(q_i,\widetilde z_i)-U(q_i,z_{i,K}^{CERO})]
\le\cbar\widetilde V.
\end{equation}
This clipping loss is exactly absorbed by the violation term on the left of \eqref{eq:virtual-master}. Therefore
\begin{equation}
\Reg_K^{\mathrm{stat}}
\le
\cbar(M\Nmax+G_B)\sqrt K+2\Delta.
\end{equation}
Substitute $\cbar=\lambda/R$, $R=K\bar N$, use \eqref{eq:delta-e}, and divide by $M$. Since every prompt utility lies in $[0,1]$, the average regret is also at most one. Combining the two bounds gives \eqref{eq:static-bound}.

\section{Proof of the Same-Path Guarantee}
\label{app:dynamic-proof}
We first record a stability inequality for one incremental utility.

\begin{lemma}[Increment stability]
For any $q,q'\in[0,1/4]$ and $z,n\ge0$ satisfying $z+n\le Z$,
\begin{equation}
\left|
[U(q,z+n)-U(q,z)]-[U(q',z+n)-U(q',z)]
\right|
\le\frac{4\lambda}{R}n|q-q'|.
\label{eq:increment-stability}
\end{equation}
\end{lemma}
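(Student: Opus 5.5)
We treat the increment as an integral of the marginal utility in \eqref{eq:slope-bound} and apply the mean-value theorem in the rate variable. Write $c=4\lambda q/R$ and $c'=4\lambda q'/R$, so that $U(q,z+n)-U(q,z)=e^{-cz}(1-e^{-cn})$. Define
\[
\phi(q):=U(q,z+n)-U(q,z)=\int_z^{z+n}\frac{4\lambda q}{R}\exp\left(-\frac{4\lambda q s}{R}\right)ds .
\]
The left side of \eqref{eq:increment-stability} equals $|\phi(q)-\phi(q')|$. It therefore suffices to show $|\phi'(q)|\le 4\lambda n/R$ for every $q\in[0,1/4]$ and then apply the mean-value theorem on the segment between $q$ and $q'$.

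Differentiating under the integral gives
\[
\phi'(q)=\frac{4\lambda}{R}\int_z^{z+n}\left(1-\frac{4\lambda q s}{R}\right)\exp\left(-\frac{4\lambda q s}{R}\right)ds .
\]
The key elementary fact is $|(1-x)e^{-x}|\le 1$ for all $x\ge0$. On $[0,1]$ the factor $(1-x)e^{-x}$ lies in $[0,1]$. For $x>1$ it is negative, and $(x-1)e^{-x}\le e^{-2}<1$, since its maximum occurs at $x=2$. The integrand is therefore bounded by $1$ in absolute value, which gives $|\phi'(q)|\le (4\lambda/R)\,n$. The mean-value theorem then yields the stated bound.

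The only step that needs care is bounding the integrand uniformly. A naive bound such as $|\partial_q U|\le 4\lambda Z/R$ from \Cref{lem:fenchel-stability} would produce $Z$ in place of $n$. Differentiating the increment itself, rather than each endpoint utility separately, is what makes the estimate scale with the new exposure $n$. The constraint $z+n\le Z$ is used only to keep both arguments inside the domain of $U$. Note also that the derivation never uses $q\le 1/4$ beyond nonnegativity of $c$, since the integrand bound holds for all $x\ge0$.
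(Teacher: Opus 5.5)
Your proposal is correct and follows essentially the same route as the paper's proof. Both write the increment as $\int_z^{z+n}\kappa q e^{-\kappa q s}\,ds$ with $\kappa=4\lambda/R$, differentiate in $q$ under the integral, bound the integrand using $|(1-x)e^{-x}|\le 1$ for $x\ge 0$, and then apply the mean-value theorem.
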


\begin{proof}
Let $\kappa=4\lambda/R$ and
\begin{equation}
G(q;z,n)=\int_z^{z+n}\kappa q e^{-\kappa qs}\,ds.
\end{equation}
Then
\begin{equation}
\partial_qG(q;z,n)
=\int_z^{z+n}\kappa(1-\kappa qs)e^{-\kappa qs}\,ds.
\end{equation}
Because $|(1-x)e^{-x}|\le1$ for $x\ge0$, the derivative magnitude is at most $\kappa n$. Apply the mean-value theorem.
\end{proof}

For each prompt choose the midpoint of its realized rate range,
\begin{equation}
q_i^\circ:=\frac12\left(\min_kq_{i,k}+\max_kq_{i,k}\right).
\end{equation}
Then $|q_{i,k}-q_i^\circ|\le V_K/2$ for every $i,k$. For any feasible allocation $\mathbf N$, let
\begin{align}
J^{\mathrm{dyn}}(\mathbf N)
&:=\sum_{i,k}\Delta U_{i,k}(\mathbf N;q_{i,k}),\\
J^\circ(\mathbf N)
&:=\sum_iU(q_i^\circ,z_{i,K}).
\end{align}
Using \eqref{eq:increment-stability} and total budget feasibility,
\begin{align}
|J^{\mathrm{dyn}}(\mathbf N)-J^\circ(\mathbf N)|
&\le\frac{4\lambda}{R}\sum_{i,k}N_{i,k}|q_{i,k}-q_i^\circ|\\
&\le\frac{2\lambda V_K}{R}B
=2\lambda M V_K.
\label{eq:dynamic-reference-gap}
\end{align}
Applying this inequality to the dynamic optimum and to the executed CERO allocation gives
\begin{equation}
\Reg_K^{\mathrm{dyn}}
\le
\OPT_K^{\mathrm{stat}}(\bm q^\circ)
-\sum_iU(q_i^\circ,z_{i,K}^{CERO})
+4\lambda M V_K.
\label{eq:dynamic-decomp}
\end{equation}
The average proxy error relative to $\bm q^\circ$ satisfies
\begin{align}
\frac1{MK}\sum_{i,k}|\qhat_{i,k+1}-q_i^\circ|
&\le\bar e_K^{\mathrm{dyn}}
+\frac1{MK}\sum_{i,k}|q_{i,k}-q_i^\circ|\\
&\le\bar e_K^{\mathrm{dyn}}+\frac{V_K}{2}.
\end{align}
Apply \Cref{thm:static} with reference vector $\bm q^\circ$ and substitute the last inequality into \eqref{eq:dynamic-decomp}. In addition, \eqref{eq:slope-bound} and total budget feasibility imply that every same-path objective is at most $\lambda M$, so the normalized regret is at most $\lambda$. Dividing by $M$ and combining the two bounds yields \eqref{eq:dynamic-bound}.

\section{Proofs of the Beta-state Propositions}
\label{app:beta}
\begin{proof}[Proof of \Cref{prop:group-variance}]
Conditional on $P_{i,k}$, each pair of future Bernoulli rewards disagrees with probability $2P_{i,k}(1-P_{i,k})$. The unbiased sample variance for binary data satisfies
$$
2\widehat{\mathrm{Var}}_G(r)
=
\binom{G}{2}^{-1}
\sum_{1\le j<\ell\le G}
\mathbf{1}\{r^{(j)}\ne r^{(\ell)}\}.
$$
Taking conditional expectations first given $P_{i,k}$ and then given $\mathcal H_i^k$ proves the claim.
\end{proof}

\begin{proof}[Proof of \Cref{prop:beta-calibration}]
Let $t=n+s_0$ and $\widehat m=a/t$. Then
\begin{equation}
\qhat_n=\frac{t}{t+1}\widehat m(1-\widehat m).
\end{equation}
The function $x(1-x)$ is one-Lipschitz on $[0,1]$, so
\begin{equation}
|\qhat_n-p(1-p)|
\le |\widehat m-p|+\frac1{4(t+1)}.
\end{equation}
Moreover,
\begin{equation}
|\E\widehat m-p|
=\frac{|a_0-ps_0|}{t}\le\frac{s_0}{t},
\end{equation}
and
\begin{equation}
\E|\widehat m-\E\widehat m|
\le\sqrt{\operatorname{Var}(\widehat m)}
=\frac{\sqrt{np(1-p)}}{t}
\le\frac{\sqrt n}{2t}.
\end{equation}
Combining the bounds proves \eqref{eq:beta-calibration-bound}.
\end{proof}

\section{Implementation Details}
\label{sec:implem}

\paragraph{Training.} We use two locally modified veRL-based \citep{sheng2025hybridflow} stacks with matched learner and decoding hyperparameters, vLLM 0.8.3 for generation, and FSDP for actor training. Bfloat16 is used for generation and actor computation. Experiments use two NVIDIA H800 GPUs for the 1.5B backbone and four for the 4B and 7B backbones. All methods use GRPO with within-group standard-deviation normalization, advantage clipping to $[-5,5]$, and loss averaged over valid response tokens. AdamW uses a constant learning rate of $10^{-6}$, $(\beta_1,\beta_2)=(0.9,0.999)$, zero weight decay, and no warmup. We use one PPO epoch per collected batch, no reshuffling, and a global PPO mini-batch size of 32. The lower/upper policy clipping parameters are $0.20/0.28$, the dual-clip constant is $10$, the importance-ratio cap is $2$, and the gradient-norm threshold is $1$. KL and entropy regularization are disabled. Training prompts are left-truncated to 2,048 tokens and responses are capped at 2,048 tokens; sampling uses temperature $1.0$, top-p $1.0$, and top-k $-1$. Training rewards are computed using the rule-based mathematical verifier from DAPO \citep{yu2025dapo}, which treats $+1$ as success and $0$ as failure. The CERO run in Table~\ref{tab:main} is included in the multi-seed results in Table~\ref{tab:ablation}.

A scheduler round is one allocation opportunity, whereas a training step is one policy-update call on a collected rollout batch. Zero-allocation rounds advance the scheduler without updating the policy. A data epoch is one traversal of the training data loader, distinct from a PPO epoch over a collected rollout batch.

\paragraph{Allocation settings.} We use Algorithm 1 with $g=4$ responses per allocation unit and $N_{\max}=2$, $C=132$, and $B=64{,}000$ in allocation units. CERO uses a $\mathrm{Beta}(1,1)$ prior, $\lambda=\rho=1$, $\nu^1=0.5$, and $\gamma_{\vartheta}=\gamma_{\nu}=1/\sqrt{500}$. Ties use a fixed seed permutation. Reserve-aware catch-up was enabled but never activated in the reported CERO runs. KnapsackRL uses positive and negative base allocations of two, a reward-history buffer of length 16, a minimum initialized history length of four, and robust epoch estimation.

CERO-greedy uses the same reward-proxy estimator, utility function, and fixed 512-response quota as CERO-fixed, but replaces prompt-dual scores with the marginal utility of one additional complete group. In our experiments, each group contains eight responses, corresponding to $N_{\max}=2$ allocation units. At the beginning of round $k$, the selection score for prompt $i$ is
\[
\Delta_{i,k}^{\mathrm{grp}}
:=
U\!\left(\hat q_{i,k},z_{i,k-1}+N_{\max}\right)
-
U\!\left(\hat q_{i,k},z_{i,k-1}\right),
\]
where $\hat q_{i,k}$ is the pre-decision proxy and $z_{i,k-1}$ is the cumulative executed exposure in the control run. We use CERO's exponential utility with the same saturation parameter and exposure normalization. Each round scores the full training prompt pool and selects the $64$ prompts with the largest scores, generating one eight-response group per prompt. This control preserves the reward-based utility and its diminishing-return structure while replacing dual-based selection with direct marginal-utility ranking.

\paragraph{Evaluation.} We follow \citet{tan2025deepscaler} and evaluate the checkpoint obtained after $500$ scheduler rounds using 16 responses per problem, temperature $0.6$, top-p $1.0$, and top-k $-1$. All methods use the same rule-based mathematical equivalence checker without an LLM judge. Metric definitions are given in Section ~\ref{sec:experiments} and Appendix ~\ref{app:best16_eval}; computational costs and schedule audits are reported in Appendix ~\ref{app:computational_cost}.

\end{APPENDICES}

\end{document}